\newcommand{\inner}[1]{\left\langle#1\right\rangle}
\def\R{\mathbb{R}}
\def\bzero{{\mathbf 0}}
\def\bone{{\mathbf 1}}
\def\bh{{\mathbf h}}
\def\bl{{\mathbf l}}
\def\bm{{\mathbf m}}
\def\bp{{\mathbf p}}
\def\bq{{\mathbf q}}
\def\bv{\mathbf v}
\def\bw{{\mathbf w}}
\def\bx{{\mathbf x}}
\def\by{{\mathbf y}}
\def\bs{{\mathbf s}}
\def\bt{{\mathbf t}}
\def\bA{\mathbf A}
\def\bB{\mathbf B}
\def\bC{\mathbf C}
\def\bD{\mathbf D}
\def\bI{\mathbf I}
\def\bM{\mathbf M}
\def\calM{\mathcal M}
\def\calS{\mathcal S}
\def\calT{\mathcal T}
\def\calB{\mathcal B}
\def\bS{\mathbf S}
\def\bT{\mathbf T}
\def\bU{{\mathbf U}}
\def\bV{{\mathbf V}}
\def\bW{{\mathbf W}}
\def\bZ{{\mathbf Z}}
\def\argmax{\mathop{\rm arg\,max}\limits}
\def\argmin{\mathop{\rm arg\,min}\limits}
\def\maxop{\mathop{\rm max}\limits} 
\def\minop{\mathop{\rm min}\limits}
\newcommand{\trace}{{\rm trace}}
\newcommand\norm[1]{\left\lVert#1\right\rVert}
\newtheorem{Def}{Definition}[section]
\newtheorem{Thm}[Def]{Theorem}
\newtheorem{Lem}[Def]{Lemma}
\newcommand*\samethanks[1][\value{footnote}]{\footnotemark[#1]}
\title{\LARGE \bf Efficient Robust Optimal Transport\\ with Application to Multi-Label Classification}
\begin{document}

 \author{Pratik Jawanpuria\thanks{Microsoft India. Email: \{pratik.jawanpuria, bamdevm\}@microsoft.com.} \quad N T V Satyadev\thanks{Vayve Technologies. Email: tvsatyadev@gmail.com.} \quad Bamdev Mishra\samethanks[1]}


\maketitle
\thispagestyle{empty}
\pagestyle{empty}


\begin{abstract}
Optimal transport (OT) is a powerful geometric tool for comparing two distributions and has been employed in various machine learning applications. In this work, we propose a novel OT formulation that takes feature correlations into account while learning the transport plan between two distributions. We model the feature-feature relationship via a symmetric positive semi-definite Mahalanobis metric in the OT cost function. For a certain class of regularizers on the metric, we show that the optimization strategy can be considerably simplified by exploiting the problem structure. For high-dimensional data, we additionally propose suitable low-dimensional modeling of the Mahalanobis metric. Overall, we view the resulting optimization problem as a non-linear OT problem, which we solve using the Frank-Wolfe algorithm. Empirical results on the discriminative learning setting, such as tag prediction and multi-class classification, illustrate the good performance of our approach.

\end{abstract}

\section{Introduction}\label{sec:intro}
Optimal transport \cite{peyre19a} has become a popular tool in diverse machine learning applications such as domain adaptation \cite{courty17a,jawanpuria20a}, multi-task learning \cite{janati19a}, natural language processing \cite{melis18a,mjaw20b}, prototype selection \cite{gurumoorthy21a}, and computer vision \cite{rubner00}, to name a few. The optimal transport (OT) metric between two probability measures, also known as the Wasserstein distance or the earth mover's distance (EMD), defines a geometry over the space of probability measures \cite{villani09a} and evaluates the minimal amount of work required to transform one measure into another with respect to a \textit{given} ground cost function (also termed as the ground metric or simply the cost function). Due to its desirable metric properties, the Wasserstein distance has also been popularly employed as a loss function in discriminative and generative model training \cite{frogner15a,genevay18a,arjovsky17a}. 

The ground cost function may be viewed as a user-defined parameter in the OT optimization problem. Hence, the effectiveness of an OT distance depends upon the suitability of the (chosen) cost function. Recent works have explored learning the latter by maximizing the OT distance over a set of ground cost functions \cite{paty19a,paty20a,dhouib20a}. From a dual perspective, this may also be viewed as an instance of robust optimization \cite{bental09a,bertsimas11a} where the transport plan is learned with respect to an adversarially chosen ground cost function. Works such as \cite{paty19a,dhouib20a} parameterize the cost function with a Schatten norm regularized Mahalanobis metric to learn a OT distance robust to high dimensional (noisy) data. A bottleneck in such explorations is the high computational complexity \cite{paty19a,dhouib20a} of computing the robust OT distances, which inhibit their usage as a loss function in large-scale learning applications. 

In this work, we explore OT distances that take feature correlations into account within the given input space. Specifically, we parameterize the cost function with a sparse Mahalanobis metric to learn OT distances robust to  spurious feature correlations present in the data. It should be noted that Mahalanobis metric is a symmetric positive semi-definite matrix and an additional sparsity-inducing constraint/regularization on it may further complicate the optimization problem, potentially leading to a high computational complexity for learning it. In this regard, our main contributions are the following.
\begin{itemize}
\item We discuss three families of sparsity-inducing regularizers for the Mahalanobis metric parameterized cost function:  entry-wise $p$-norm for $p\in(1,2]$, KL-divergence based regularization, and the doubly-stochastic regularization. We show that for those regularizers enforcement of the symmetric positive semi-definite constraint is {\it not} needed as the problem structure {\it implicitly} learns a symmetric positive semi-definite matrix at optimality. The implications are two fold: a lower computational cost of computing the proposed robust OT distance and a simpler optimization methodology. 
\item To further reduce the computational burden of the proposed robust OT distance computation, we propose a novel $r$-dimensional modeling of the Mahalanobis metric for the studied robust OT problems, resulting in an even lower per-iteration computational cost, where $r \ll d$. The parameter $r$ provides an effective trade-off between computational efficiency and accuracy. 
\item We discuss how to use the robust distance as a loss in discriminative learning settings such as multi-class and multi-label classification problems. 
\item Empirically, we observe that computing the proposed robust OT distance is at least {\it $45$ times faster} than existing Mahalanobis metric parameterized robust OT distances \cite{paty19a,dhouib20a} and also obtains better generalization performance as a loss function in discriminative settings. 
\end{itemize}

The outline of the paper is as follows. Section~\ref{sec:related} presents a brief overview of the relevant OT literature. Section \ref{sec:formulations} discusses the proposed formulations. In Section~\ref{sec:optimization}, we discuss the optimization methodology. 
The setup of using the robust OT distance in learning problems is detailed in Section \ref{sec:ROTloss}. In Section \ref{sec:experiments}, we present the empirical results.  


\section{Background and related work}\label{sec:related}
Given two probability measures $\mu_1$ and $\mu_2$ over metric spaces $\calS$ and $\calT$, respectively, the optimal transport problem due to \cite{kantorovich42a} aims at finding a transport plan $\gamma$ as a solution to the following problem:
\begin{equation}\label{eqn:optimal-transport}
    {\rm W}_c(\mu_1,\mu_2) = \inf_{\gamma\in\Pi(\mu_1,\mu_2)} \int_{\calS\times\calT} c(\bs,\bt) d\gamma(\bs,\bt),
\end{equation}
where $\Pi(\mu_1,\mu_2)$ is the set of joint distributions with marginals $\mu_1$ and $\mu_2$ and $c:\calS \times \calT \rightarrow \R_{+}:(\bs,\bt)\rightarrow c(\bs,\bt)$ represents the transportation cost function. In several real-world applications, the distributions $\mu_1$ and $\mu_2$ are not available; instead samples from them are given. Let $\{\bs_i\}_{i=1}^m\in\R^{d}$ and $\{\bt_j\}_{j=1}^n\in\R^d$ represent $m$ and $n$ iid samples from $\mu_1$ and $\mu_2$, respectively. Then, empirical estimates of $\mu_1$ and $\mu_2$ supported on the given samples, defined as \begin{equation}
    \hat{\mu}_1\coloneqq\sum_{i=1}^m \bp_i\delta_{\bs_i},\;\;\;\;\;  \hat{\mu}_2\coloneqq\sum_{j=1}^n \bq_j\delta_{\bt_j},
\end{equation}
can be employed for computing the OT distance. Here,  $\delta$ is the delta function and $\bp\in\Delta_m$ and $\bq\in\Delta_n$ are the probability vectors, with $\Delta_d\coloneqq\{\bx\in\R^{d}|\bx\geq\bzero;\bx^\top\bone=1\}$. In this setting, the OT problem (\ref{eqn:optimal-transport}) may be rewritten as:
\begin{equation}\label{eqn:optimal-transport-discrete}
    {\rm W}_c(\bp,\bq) = \min_{\gamma\in\Pi(\bp,\bq)} \inner{\gamma,\bC},
\end{equation}
where $\bC$ is the $m\times n$ ground cost matrix with $\bC_{ij}=c(\bs_i,\bt_j)$ and $\Pi(\bp,\bq)=\{\gamma\in\R^{m\times n}|\gamma\geq\bzero;\gamma\bone=\bp;\gamma^\top\bone=\bq\}$. We obtain the popular $2$-Wasserstein distance by setting the cost function as the squared Euclidean function, i.e., $c(\bs,\bt)=\norm{\bs-\bt}^2$ when $\bs,\bt\in\R^d$. The $2$-Wasserstein distance can equivalently be reformulated as follows \cite{paty19a}:
\begin{equation}\label{eqn:W22distance}
    {{\rm W}}_2^2(\bp,\bq) = \minop_{\gamma\in\Pi(\bp,\bq)} \inner{\bV_\gamma,\bI},
\end{equation}
where $\bV_\gamma=\sum_{ij} (\bs_i-\bt_j)(\bs_i-\bt_j)^\top \gamma_{ij}$ is the weighted second-order moment of all source-target displacements. 

Recent works \cite{paty19a,kolouri19a,deshpande19a} have studied \textit{minimax} variants of the Wasserstein distance that aim at maximizing the OT distance in a projected low-dimensional space. From a duality perspective, these variants may also be viewed as instances of robust OT distance as they learn transport plan corresponding to worst possible ground cost function.  
Paty and Cuturi~\cite{paty19a} propose a robust variant of the ${\rm W}_2^2$ distance, termed as the Subspace Robust Wasserstein (SRW) distance, as follows: 
\begin{equation}\label{eqn:SRW1}
    {{\rm SRW}}_k^2(\bp,\bq) = \minop_{\gamma\in\Pi(\bp,\bq)}\maxop_{\bM\in\calM}\ \inner{\bV_\gamma,\bM},
\end{equation}
where the domain $\calM$ is defined as $\calM=\{\bM:\bzero\preceq\bM\preceq\bI {\rm \ and \ } \trace(\bM)=k\}$. 
It should be noted that $\inner{\bV_\gamma,\bM}=\int_{\calS\times\calT} c_\bM(\bs,\bt) d\gamma(\bs,\bt)$, where $c_{\bM}(\bs,\bt)=(\bs-\bt)^\top\bM(\bs-\bt)$ is a cost function parameterized by a Mahalanobis metric (symmetric positive semi-definite matrix) $\bM$ of size $d\times d$. 
Similarly, Dhouib~et~al.~\cite{dhouib20a} proposed a variant of robust OT distance (\ref{eqn:SRW1}), but with the domain $\calM$ defined as $\calM=\{\bM:\bM\succeq\bzero {\rm \ and\ } \norm{\bM}_{*p}=1\}$, where $\norm{\cdot}_{*p}$ denotes the Schatten $p$-norm regularizer, i.e., $\norm{\bM}_{*p}\coloneqq(\sum_i \sigma_i(\bM)^p)^\frac{1}{p}$. Here, $\sigma_i(\bM)$ denotes the $i$-th largest eigenvalue of $\bM$.

Both \cite{paty19a,dhouib20a} pose their Mahalanobis metric parameterized robust OT problems as optimization problems over the metric $\bM$. This usually involves satisfying/enforcing the positive semi-definite (PSD) constraint at every iteration, which requires eigendecomposition operations having $O(d^3)$ computational cost. In this context, Dhouib~et~al.~\cite{dhouib20a} have remarked in their work that ``\textit{...PSD constraints increase considerably the computational burden of any optimization problem, yet they are necessary for the obtained cost function to be a true metric}''. 
In their formulation, \cite{dhouib20a} needs to explicitly enforce the PSD constraint (e.g., via eigendecomposition) for all Schatten $p$-norm regularizers except for $p=2$.

\section{Novel formulations for robust OT}\label{sec:formulations}
We consider a general formulation of Mahalanobis metric parameterized robust optimal transport problem as follows:
\begin{equation}\label{eqn:MahalanobisRobustOT}
    {\rm W}_{\rm ROT}(\bp,\bq)\coloneqq\minop_{\gamma\in\Pi(\bp,\bq)}\ f(\gamma),
\end{equation}
where the function $f:\R^{m\times n}\rightarrow\R : \gamma \mapsto f(\gamma)$ is defined as 
\begin{equation}\label{eqn:fgamma}
    f(\gamma)\coloneqq\maxop_{\bM\in\calM}\ \inner{\bV_\gamma,\bM}.
\end{equation}
Here, $\calM=\{\bM:\bM\succeq\bzero {\rm \ and \ }\Omega(\bM) \leq 1\}$ and $\Omega(\cdot)$ is a convex regularizer on the set of positive semi-definite matrices. 
It should be noted that (\ref{eqn:MahalanobisRobustOT}) is a convex optimization problem. Moreover, by the application of the Sion-Kakutani min-max theorem \cite{sion58}, Problem (\ref{eqn:MahalanobisRobustOT}) can be shown to be equivalent to its dual max-min problem:  $\max_{\bM\in\calM}\min_{\gamma\in\Pi(\bp,\bq)}\inner{\bV_\gamma,\bM}$. 

As discussed in Section~\ref{sec:related}, robust OT distances studied in \cite{paty19a,dhouib20a} can be obtained from (\ref{eqn:MahalanobisRobustOT}) by considering appropriate Schatten-norm based regularizers as $\Omega(\cdot)$. It is well known that Schatten-norm regularizers influence sparsity of the eigenvalues of $\bM$. In contrast, we study novel robust OT formulations based on sparsity promoting regularizers on the entries of $\bM$. A sparse Mahalanobis metric is useful in avoiding spurious feature correlations~\cite{rosales06a,qi09a}.

\subsection{Element-wise $p$-norm regularization on $\bM$}\label{sec:p_norm}

We begin by discussing the element-wise $p$-norm regularization on the Mahalanobis metric $\bM$: 
$\calM=\{\bM:\bM\succeq\bzero {\rm \ and \ } \norm{\bM}_{p}\leq1\}$ in (\ref{eqn:fgamma}), where $\norm{\bM}_{p}=(\sum_{ij} |\bM_{ij}|^p)^{\frac{1}{p}}$. For $p$ in between $1$ and $2$, the entry-wise $p$-norm regularization induces a sparse structure on the metric $\bM$. This family of element-wise $p$-norm  regularizers includes the popular Frobenius norm at $p=2$. The following result provides an efficient reformulation of the robust OT problem (\ref{eqn:MahalanobisRobustOT}) with the above defined $\calM$ for a subset of the element-wise $p$-norm regularizers on $\bM$.


\begin{Thm}\label{thm:p-norm}
Consider Problem (\ref{eqn:MahalanobisRobustOT}) with $\calM=\{\bM:\bM\succeq\bzero {\rm \ and \ } \norm{\bM}_p\leq1\}$, $p=\frac{2k}{2k-1}$, and $k\in\mathbb{N}$. Then, (\ref{eqn:MahalanobisRobustOT}) is equivalent to the following optimization problem:
\begin{equation}\label{eqn:proposed-p-norm-min}
{{\rm W}}_{{\rm P}}(\bp,\bq)\coloneqq\minop_{\gamma\in\Pi(\bp,\bq)}\ \norm{\bV_\gamma}_{2k},
\end{equation}
where $\bV_\gamma=\sum_{ij} (\bs_i-\bt_j)(\bs_i-\bt_j)^\top \gamma_{ij}$. For a given $\gamma\in\Pi(\bp,\bq)$, the optimal solution of (\ref{eqn:fgamma}) is $\bM^{*} (\gamma) =\norm{\bV_{\gamma}}_{2k}^{1-2k}(\bV_{\gamma})^{\circ(2k-1)}$, where $\bA^{\circ (k)}$ denotes the $k$-th Hadamard power of a matrix $\bA$.
\end{Thm}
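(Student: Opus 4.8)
The plan is to fix an arbitrary $\gamma\in\Pi(\bp,\bq)$ and evaluate the inner problem $f(\gamma)=\max_{\bM\in\calM}\inner{\bV_\gamma,\bM}$ in closed form, showing it equals $\norm{\bV_\gamma}_{2k}$; substituting into (\ref{eqn:MahalanobisRobustOT}) then yields the claim at once. The starting observation is that the chosen exponent $p=\frac{2k}{2k-1}$ is exactly the H\"older conjugate of $2k$, since $\frac1p+\frac1{2k}=1$.

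First I would drop the PSD constraint and analyze the relaxed maximization $\max\{\inner{\bV_\gamma,\bM}:\norm{\bM}_p\le1\}$ over the whole entrywise $p$-norm ball. Viewing matrices as vectors of their entries, this is the standard dual-norm identity: by H\"older's inequality $\inner{\bV_\gamma,\bM}\le\norm{\bV_\gamma}_{2k}\norm{\bM}_p$, and the equality case pins down the maximizer. A direct computation gives $\bM^*=\norm{\bV_\gamma}_{2k}^{1-2k}(\bV_\gamma)^{\circ(2k-1)}$, with $\norm{\bM^*}_p=1$ and $\inner{\bV_\gamma,\bM^*}=\norm{\bV_\gamma}_{2k}$. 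Crucially, because $2k-1$ is odd the entrywise power $(\bV_\gamma)^{\circ(2k-1)}$ already carries the correct sign pattern $\sgn(\cdot)\,|\cdot|^{2k-1}$ demanded by the H\"older equality condition, so no separate sign matrix is needed.

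The heart of the argument, and the step I expect to be the main obstacle, is to verify that this relaxed maximizer $\bM^*$ is in fact positive semi-definite, hence feasible for the original PSD-constrained set $\calM$. I would first note that $\bV_\gamma=\sum_{ij}(\bs_i-\bt_j)(\bs_i-\bt_j)^\top\gamma_{ij}$ is a nonnegative combination of rank-one PSD matrices (as $\gamma_{ij}\ge0$), so $\bV_\gamma\succeq\bzero$. Then I would invoke the Schur product theorem: the Hadamard product of two PSD matrices is PSD, whence by induction every integer Hadamard power $(\bV_\gamma)^{\circ m}$ with $m\ge1$ is PSD. In particular $(\bV_\gamma)^{\circ(2k-1)}\succeq\bzero$, and since $\norm{\bV_\gamma}_{2k}^{1-2k}>0$ the matrix $\bM^*$ is a nonnegative scalar multiple of a PSD matrix and therefore PSD. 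This is precisely where the restriction $p=\frac{2k}{2k-1}$ with $k\in\N$ is used: it forces the conjugate exponent to be an even integer $2k$, so that the relevant Hadamard power $2k-1$ is a positive odd integer to which the Schur product theorem applies.

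Finally I would close the loop between the relaxed and constrained problems. Since $\calM$ is contained in the $p$-norm ball, we trivially have $f(\gamma)\le\norm{\bV_\gamma}_{2k}$; conversely the relaxed maximizer $\bM^*$ lies in $\calM$ by the previous paragraph, so $f(\gamma)\ge\inner{\bV_\gamma,\bM^*}=\norm{\bV_\gamma}_{2k}$. Hence $f(\gamma)=\norm{\bV_\gamma}_{2k}$ with optimal $\bM^*(\gamma)$ as claimed, and the PSD constraint is automatically satisfied at optimality without ever being enforced. Taking the outer minimization gives ${\rm W}_{\rm ROT}(\bp,\bq)=\min_{\gamma\in\Pi(\bp,\bq)}\norm{\bV_\gamma}_{2k}={\rm W}_{\rm P}(\bp,\bq)$, completing the proof.
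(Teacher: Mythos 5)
Your proposal is correct and follows essentially the same route as the paper's proof: relax the PSD constraint, apply H\"older duality for the entrywise $p$-norm to get $\bM^*(\gamma)=\norm{\bV_\gamma}_{2k}^{1-2k}(\bV_\gamma)^{\circ(2k-1)}$, and invoke the Schur product theorem (using that $\bV_\gamma\succeq\bzero$ and $2k-1$ is a positive integer) to conclude the relaxed maximizer is feasible for $\calM$, hence optimal. Your explicit remarks on the sign pattern in the H\"older equality case and the sandwich argument between the relaxed and constrained problems are slightly more detailed than the paper's write-up but do not change the argument.
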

\begin{proof}
The proof is in Appendix \ref{supp:sec:proof_pnorm}. 
\end{proof}



From Theorem~\ref{thm:p-norm}, it should be observed that for a given $\gamma$, the optimal $\bM^{*}(\gamma)$ is an element-wise function of the matrix $\bV_\gamma$. An implication is that the computation of $\bM^*(\gamma)$ costs $O(d^2)$.

\subsection{KL-divergence regularization on $\bM$}\label{sec:KL}

We next consider the generalized KL-divergence regularization on the metric $\bM$, i.e., $\calM=\{\bM:\bM\succeq\bzero;{{\rm D}_{\rm KL}}(\bM,\bM_0)\leq 1\}$, where ${{\rm D}_{\rm KL}}(\bM,\bM_0)=\sum_{a,b} \bM_{ab}(\ln{\left(\bM_{ab}/{\bM_0}_{ab}\right)})-(\bM_{ab} - {\bM_{0}}_{ab})$ denotes the Bregman distance, with negative entropy as the distance-generating function, between the matrices $\bM$ and $\bM_0\succeq\bzero$. Here, $\bM_0$ is a given symmetric PSD matrix, which may be useful in introducing prior domain knowledge (e.g., a block diagonal matrix $\bM_0$ ensures grouping of features) and $\ln(\cdot)$ is the natural logarithm operation. 
The function $f(\gamma)$ in (\ref{eqn:fgamma}) with the above defined $\calM$ may be expressed equivalently in the Tikhonov form as
\begin{equation}\label{eqn:proposed-minimax-kl-divergence}
f(\gamma) = \maxop_{\bM\succeq \bzero}\  \inner{\bV_\gamma,\bM}-\lambda_{\bM} {{\rm D}_{\rm KL}}(\bM,\bM_0),
\end{equation}
where $\lambda_{\bM}>0$ is a regularization parameter. The following result provides an efficient reformulation of the KL-divergence regularized robust OT problem. 
\begin{Thm}\label{thm:kl-regularization}
The following problem is an equivalent dual of the convex problem (\ref{eqn:MahalanobisRobustOT}) where $f(\gamma)$ is as defined in (\ref{eqn:proposed-minimax-kl-divergence}):
\begin{equation}\label{eqn:proposed-min-kl-divergence}
\begin{array}{lll}
{{\rm W}}_{{\rm KL}}(\bp,\bq) \\
= \minop_{\gamma\in\Pi(\bp,\bq)} \lambda_{\bM} \bone^\top( \bM_0\odot e^{\circ (\bV_\gamma/\lambda_{\bM})} - \bM_0)\bone,
\end{array}
\end{equation}
where $e^{\circ\bA}$ denotes the Hadamard exponential (element-wise exponential) of a matrix $\bA$ and $\odot$ denotes the Hadamard product (element-wise product). For a given $\gamma\in\Pi(\bp,\bq)$, the optimal solution of  (\ref{eqn:proposed-minimax-kl-divergence}) is  $\bM^{*}(\gamma)=\bM_0\odot e^{\circ (\bV_{\gamma}/\lambda_{\bM})}$. 
\end{Thm}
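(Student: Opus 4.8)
The plan is to solve the inner maximization in (\ref{eqn:proposed-minimax-kl-divergence}) in closed form for a fixed $\gamma$ and then substitute the optimal value back into the outer minimization over $\Pi(\bp,\bq)$. First I would observe that the inner objective $\inner{\bV_\gamma,\bM}-\lambda_{\bM}{{\rm D}_{\rm KL}}(\bM,\bM_0)$ is concave in $\bM$ (an affine term minus the convex Bregman distance) and, crucially, separable across the entries $\bM_{ab}$. I would temporarily discard the constraint $\bM\succeq\bzero$ and maximize over entrywise-nonnegative $\bM$. Using that $\frac{\partial}{\partial \bM_{ab}}[\bM_{ab}\ln(\bM_{ab}/{\bM_0}_{ab})-(\bM_{ab}-{\bM_0}_{ab})]=\ln(\bM_{ab}/{\bM_0}_{ab})$, the entrywise stationarity condition becomes $\bV_{ab}=\lambda_{\bM}\ln(\bM_{ab}/{\bM_0}_{ab})$, which yields the candidate maximizer $\bM^{*}(\gamma)=\bM_0\odot e^{\circ(\bV_\gamma/\lambda_{\bM})}$.

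The key step -- and the one I expect to be the main obstacle -- is verifying that this unconstrained maximizer is in fact positive semi-definite, so that dropping $\bM\succeq\bzero$ was harmless. Here I would argue as follows. Since $\gamma\geq\bzero$, the matrix $\bV_\gamma=\sum_{ij}(\bs_i-\bt_j)(\bs_i-\bt_j)^\top\gamma_{ij}$ is a nonnegative combination of rank-one PSD matrices, hence $\bV_\gamma\succeq\bzero$, and so $\bV_\gamma/\lambda_{\bM}\succeq\bzero$ as $\lambda_{\bM}>0$. By the Schur product theorem every Hadamard power $(\bV_\gamma/\lambda_{\bM})^{\circ k}$ is PSD, so the Hadamard exponential $e^{\circ(\bV_\gamma/\lambda_{\bM})}=\sum_{k\geq 0}(\bV_\gamma/\lambda_{\bM})^{\circ k}/k!$ is PSD as a convergent sum of PSD matrices. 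Applying the Schur product theorem once more to $\bM_0\succeq\bzero$ and $e^{\circ(\bV_\gamma/\lambda_{\bM})}\succeq\bzero$ shows $\bM^{*}(\gamma)\succeq\bzero$. Concavity then guarantees that this stationary point is the global maximizer over the constrained set $\calM$, so the constrained and unconstrained optima coincide; this is precisely the implicit-PSD phenomenon the paper advertises.

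Finally I would substitute $\bM^{*}(\gamma)$ back into (\ref{eqn:proposed-minimax-kl-divergence}). The useful identity is $\ln(\bM^{*}_{ab}/{\bM_0}_{ab})=\bV_{ab}/\lambda_{\bM}$, which gives $\lambda_{\bM}\sum_{ab}\bM^{*}_{ab}\ln(\bM^{*}_{ab}/{\bM_0}_{ab})=\sum_{ab}\bV_{ab}\bM^{*}_{ab}=\inner{\bV_\gamma,\bM^{*}}$. Hence the linear term $\inner{\bV_\gamma,\bM^{*}}$ cancels against the entropy part of the Bregman distance, leaving only $f(\gamma)=\lambda_{\bM}\sum_{ab}(\bM^{*}_{ab}-{\bM_0}_{ab})=\lambda_{\bM}\bone^\top(\bM_0\odot e^{\circ(\bV_\gamma/\lambda_{\bM})}-\bM_0)\bone$. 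Plugging this expression for $f(\gamma)$ into ${\rm W}_{\rm ROT}(\bp,\bq)=\min_{\gamma\in\Pi(\bp,\bq)}f(\gamma)$ yields the claimed formula (\ref{eqn:proposed-min-kl-divergence}) for ${\rm W}_{\rm KL}(\bp,\bq)$. The equivalence with the dual max-min form follows from the Sion--Kakutani theorem already invoked for Problem (\ref{eqn:MahalanobisRobustOT}), since the inner problem is concave in $\bM$ and the outer one convex in $\gamma$.
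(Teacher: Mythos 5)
Your proposal is correct and follows essentially the same route as the paper's proof: solve the inner maximization without the PSD constraint to get $\bM^{*}(\gamma)=\bM_0\odot e^{\circ(\bV_\gamma/\lambda_{\bM})}$, observe that this matrix is automatically PSD (the paper asserts that the Hadamard exponential preserves positive semi-definiteness, which you justify explicitly via the Schur product theorem and the power-series expansion), and substitute back to obtain (\ref{eqn:proposed-min-kl-divergence}). Your write-up merely fills in details the paper leaves implicit, such as the entrywise stationarity condition and the cancellation of the linear term against the entropy part.
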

\begin{proof}
We first characterize the relaxed unconstrained (i.e., without the PSD constraint) solution of (\ref{eqn:proposed-minimax-kl-divergence}), which is $\bM^{*}(\gamma)=\bM_0\odot e^{\circ (\bV_{\gamma}/\lambda_{\bM})}$. As element-wise exponential operation on symmetric PSD matrices preserves positive semi-definiteness, i.e., $\bM^{*}(\gamma)=\bM_0\odot e^{\circ (\bV_{\gamma}/\lambda_{\bM})}$ is also PSD, implying that this is also the optimal solution of (\ref{eqn:proposed-minimax-kl-divergence}). Putting $\bM^{*}(\gamma)$ in the objective function of (\ref{eqn:proposed-minimax-kl-divergence}) leads to (\ref{eqn:proposed-min-kl-divergence}).
\end{proof}

The proposed KL-regularization based OT formulation (\ref{eqn:proposed-minimax-kl-divergence}) is closely related to selecting discriminative features that maximize the optimal transport distance between two distributions. We formalize this in our next result. 
\begin{Lem}\label{lemma:feature-selection}
Let $\gamma^{*}$ be the solution of (\ref{eqn:proposed-min-kl-divergence}) with $\bM_0=\bI$. Then, $\gamma^{*}$ is also the solution of the following robust OT formulation:
\begin{equation}\label{eqn:feature-selection}
\minop_{\gamma\in\Pi(\bp,\bq)}\ \maxop_{\bm\in\Delta_d} \inner{\gamma,\bC_{\bm}}-\lambda_{\bm} {{\rm D}_{\rm KL}}(\bm,\bone),
\end{equation}
$\Delta_d$ is the $d$-dimensional simplex and $\bC_{\bm}$ is the ground cost matrix corresponding to the function $c_{\bm}(\bs_i,\bt_j)=\sum_{a=1}^d \bm_a (\bs_{ia}-\bt_{ja})^2$. Here, $\bs_{ia}$ and $\bt_{ja}$ denote the $a$-th coordinate of the data points $\bs_i$ and $\bt_j$, respectively. 
\end{Lem}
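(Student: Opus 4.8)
The plan is to eliminate the inner maximization in each of the two problems, reduce both to the minimization of a scalar function of $\gamma$, and then conclude by a monotonicity argument. Throughout I take the two temperatures to coincide, $\lambda_{\bm}=\lambda_{\bM}=:\lambda$, which is the natural correspondence between the two formulations.

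First I would specialize Theorem~\ref{thm:kl-regularization} to $\bM_0=\bI$. The optimal metric $\bM^{*}(\gamma)=\bI\odot e^{\circ(\bV_\gamma/\lambda)}$ is then diagonal, because the Hadamard product with $\bI$ annihilates every off-diagonal entry. Hence the reduced objective (\ref{eqn:proposed-min-kl-divergence}) collapses to $f_{\rm KL}(\gamma)=\lambda\sum_{a=1}^{d}\big(e^{v_a(\gamma)/\lambda}-1\big)$, which depends on $\gamma$ only through the diagonal entries $v_a(\gamma):=(\bV_\gamma)_{aa}=\sum_{ij}(\bs_{ia}-\bt_{ja})^2\gamma_{ij}$ of the displacement moment matrix.

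Second, I would evaluate the inner maximization of the feature-selection problem (\ref{eqn:feature-selection}). Expanding the definitions of $\bC_{\bm}$ and $\bV_\gamma$ gives $\inner{\gamma,\bC_{\bm}}=\sum_{a}\bm_a v_a(\gamma)$, so, after writing ${\rm D}_{\rm KL}(\bm,\bone)=\sum_a\bm_a\ln\bm_a+(d-1)$ for $\bm\in\Delta_d$, the inner problem becomes the entropy-regularized linear maximization $\max_{\bm\in\Delta_d}\sum_a\bm_a v_a-\lambda\sum_a\bm_a\ln\bm_a$ up to the $\gamma$-independent constant $-\lambda(d-1)$. Its maximizer is the Gibbs/softmax distribution $\bm^{*}_a\propto e^{v_a(\gamma)/\lambda}$, whose optimal value is the log-sum-exp $\lambda\ln\big(\sum_a e^{v_a(\gamma)/\lambda}\big)$ (the convex conjugate of negative entropy over the simplex). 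Thus $f_{\rm FS}(\gamma)=\lambda\ln\big(\sum_a e^{v_a(\gamma)/\lambda}\big)-\lambda(d-1)$.

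Finally, writing $S(\gamma):=\sum_{a=1}^{d}e^{v_a(\gamma)/\lambda}>0$, I would observe that $f_{\rm KL}(\gamma)=\lambda S(\gamma)-\lambda d$ and $f_{\rm FS}(\gamma)=\lambda\ln S(\gamma)-\lambda(d-1)$ are both \emph{strictly increasing} functions of the \emph{same} scalar $S(\gamma)$. Since a strictly monotone reparametrization of the objective leaves the minimizer over the common feasible set $\Pi(\bp,\bq)$ unchanged, the two problems attain their minima at the same plan, i.e., the $\gamma^{*}$ minimizing $f_{\rm KL}$ also minimizes $f_{\rm FS}$, which is the claim. The main obstacle I anticipate is reconciling the two different domains of the inner variable: in the KL--Mahalanobis formulation the effective (diagonal) weights range freely over $\R^{d}_{+}$ with unconstrained sum, whereas in the feature-selection formulation they are confined to the probability simplex $\Delta_d$. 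The resolution is that after eliminating the inner variables both objectives depend on $\gamma$ only through the single aggregate $S(\gamma)$, and it is the monotonicity in $S(\gamma)$ --- not the objective values themselves --- that drives the argument; I would therefore be careful to verify that the simplex constraint contributes only the $\gamma$-independent additive constant identified above, so that it cannot influence the location of the minimizer.
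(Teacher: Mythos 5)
Your proof is correct and follows essentially the same route as the paper's: both eliminate the inner maximization over $\bm\in\Delta_d$ via the log-sum-exp/softmax solution, reduce the two objectives to monotone functions of the single aggregate $\sum_a e^{v_a(\gamma)/\lambda}$ of the diagonal of $\bV_\gamma$, and conclude by monotonicity of the logarithm. Your version is, if anything, slightly more explicit about the $\lambda_{\bm}=\lambda_{\bM}$ identification and the constant contributed by the simplex constraint, but there is no substantive difference.
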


\begin{proof}
The proof is in Appendix \ref{supp:sec:proof_simplex}.
\end{proof}

The simplex constraint over feature weights in (\ref{eqn:feature-selection}) results in selecting features that maximize the OT distance. 
Thus, the proposed robust OT formulation (\ref{eqn:proposed-min-kl-divergence}) may equivalently be viewed as a robust OT formulation involving feature selection when $\bM_0=\bI$. Feature selection in the OT setting has also been explored in a concurrent work by Petrovich et al. \cite{petrovich20a}.

\subsection{Doubly-stochastic regularization on $\bM$}\label{sec:doubly_stochastic}

We further study the KL-regularization based robust OT objective (\ref{eqn:proposed-minimax-kl-divergence}) with a doubly-stochastic constraint on the Mahalanobis metric $\bM$, i.e.,
\begin{equation}\label{eqn:proposed-doubly-stochastic}
    {\rm W}_{\rm DS}(\bp,\bq) = \minop_{\gamma\in\Pi(\bp,\bq)} f(\gamma), 
\end{equation}
where
\begin{equation}\label{eqn:proposed-minimax-doubly-stochastic}
f(\gamma) = \maxop_{\bM\in\calM}\ \inner{\bV_\gamma,\bM} - \lambda_{\bM}{{\rm D}_{\rm KL}}(\bM,\bM_0),
\end{equation}
where $\calM=\{\bM:\bM\succeq\bzero ,\bM \geq\bzero,\bM\bone=\bone,\bM^\top\bone=\bone\}$. 
Learning a doubly-stochastic Mahalanobis metric is of interest in  applications such as graph clustering and community detection \cite{zass06a,arora11a,wang16a,douik18a,douik2019manifold}. Since $\bM$ is a symmetric PSD matrix, $\bM\bone=\bone \Leftrightarrow \bM^\top\bone=\bone$. 

In general, optimization over the set of symmetric PSD doubly-stochastic matrices is non trivial and computationally challenging. By exploiting the problem structure, however, we show that (\ref{eqn:proposed-minimax-doubly-stochastic}) can be solved efficiently. 
Our next result characterizes {the optimal} solution of (\ref{eqn:proposed-minimax-doubly-stochastic}). 
\begin{Thm}\label{thm:doubly-stochastic-regularization}
For a given $\gamma\in\Pi(\bp,\bq)$, {the optimal} solution of (\ref{eqn:proposed-minimax-doubly-stochastic}) has the form $\bM^{*}(\gamma)=\bD\left( \bM_0 \odot e^{\circ(\bV_\gamma/\lambda_{\bM})}\right)\bD$, where $\bD$ is a diagonal matrix with positive entries. The matrix $\bD$ can be computed using the Sinkhorn algorithm \cite{cuturi13a}. 
\end{Thm}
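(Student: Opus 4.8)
The plan is to relax the positive semi-definite constraint, solve the resulting concave maximization in closed form up to a diagonal scaling, and then show that the relaxed optimizer is automatically PSD, so that it also solves the constrained problem. To this end I first drop $\bM\succeq\bzero$ and retain only $\bM\geq\bzero$, $\bM\bone=\bone$, and $\bM^\top\bone=\bone$ (without assuming symmetry). The objective $\inner{\bV_\gamma,\bM}-\lambda_{\bM}{\rm D}_{\rm KL}(\bM,\bM_0)$ is concave in $\bM$ and the feasible set (nonnegative matrices with the prescribed unit marginals) is convex and compact, so the KKT conditions are necessary and sufficient. The entrywise nonnegativity is enforced implicitly by the domain of the KL term, whose gradient blows up as an entry approaches $0$, so I only introduce Lagrange multipliers $\bu,\bv\in\R^d$ for the two marginal constraints.

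Writing stationarity $\partial L/\partial\bM_{ab}=0$ gives $(\bV_\gamma)_{ab}-\lambda_{\bM}\ln(\bM_{ab}/{\bM_0}_{ab})-u_a-v_b=0$, which I solve for $\bM_{ab}$ to obtain $\bM_{ab}={\bM_0}_{ab}\,e^{(\bV_\gamma)_{ab}/\lambda_{\bM}}\,e^{-u_a/\lambda_{\bM}}\,e^{-v_b/\lambda_{\bM}}$. Collecting the multipliers into positive diagonal matrices $\bD_1=\mathrm{diag}(e^{-u_a/\lambda_{\bM}})$ and $\bD_2=\mathrm{diag}(e^{-v_b/\lambda_{\bM}})$ yields $\bM^{*}=\bD_1\,\bH\,\bD_2$ with $\bH:=\bM_0\odot e^{\circ(\bV_\gamma/\lambda_{\bM})}$, and choosing $\bD_1,\bD_2$ so that the two marginal constraints hold is exactly the matrix-scaling problem solved by the Sinkhorn iterations.

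Next I collapse the two scalings into one by exploiting symmetry. Since $\bV_\gamma$ and $\bM_0$ are symmetric, $\bH$ is symmetric, so $(\bD_1\bH\bD_2)^\top=\bD_2\bH\bD_1$ is also doubly stochastic; by uniqueness of the doubly-stochastic scaling of a fully supported nonnegative matrix this forces $\bM^{*}=(\bM^{*})^\top$, which after comparing entries ($x_a y_b\,\bH_{ab}=y_a x_b\,\bH_{ab}$ on the support) gives $\bD_1=\bD_2=:\bD$ up to an absorbed scalar. Hence $\bM^{*}(\gamma)=\bD\bigl(\bM_0\odot e^{\circ(\bV_\gamma/\lambda_{\bM})}\bigr)\bD$, the claimed form, with $\bD$ computed by the (symmetric) Sinkhorn algorithm. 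It remains to check that this relaxed optimizer is PSD, hence feasible for the original problem: as already established in the proof of Theorem~\ref{thm:kl-regularization}, the Hadamard exponential of the PSD matrix $\bV_\gamma$ is PSD (Schur product theorem) and, since $\bM_0\succeq\bzero$, so is $\bH$; therefore $\bM^{*}=\bD\bH\bD$ is a congruence of a PSD matrix by a positive diagonal and is PSD. Because the relaxed feasible set contains the original one and the relaxed optimum lies in the smaller set, it also solves (\ref{eqn:proposed-minimax-doubly-stochastic}).

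The main obstacle I anticipate is the reduction $\bD_1=\bD_2$: it requires the existence and uniqueness of a doubly-stochastic diagonal scaling of $\bH$, which holds when $\bH$ has total support (automatic when $\bM_0$ is entrywise positive, and otherwise handled by working within the support pattern of $\bM_0$), together with the connectivity/irreducibility needed to conclude that the ratio $x_a/y_a$ is constant across indices. The remaining steps, the stationarity computation and the PSD-via-congruence argument, are routine given Theorem~\ref{thm:kl-regularization}.
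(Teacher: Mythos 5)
Your proof is correct and follows essentially the same route as the paper: relax the PSD constraint, recognize the remaining problem as a symmetric entropic matrix-scaling (Sinkhorn) problem yielding $\bM^{*}=\bD\bigl(\bM_0\odot e^{\circ(\bV_\gamma/\lambda_{\bM})}\bigr)\bD$, and then restore feasibility by noting that the Hadamard exponential and Hadamard product preserve positive semi-definiteness (Schur product theorem) and that congruence by a positive diagonal keeps the matrix PSD. The only difference is that you derive the scaling form and the reduction $\bD_1=\bD_2$ explicitly from the KKT conditions and the uniqueness of the doubly-stochastic scaling, whereas the paper delegates these steps to citations of Cuturi and Knight.
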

\begin{proof}
The proof is in Appendix \ref{supp:sec:proof_DS}.
\end{proof}

\section{Optimization}\label{sec:optimization}
In this section, we discuss how to solve (\ref{eqn:MahalanobisRobustOT}). But first, we discuss a novel low-dimensional modeling approach that helps reduce the computational burden and is amenable to our proposed regularizations in Section \ref{sec:formulations}.

\subsection{Low-dimensional modeling of $\bM$ with feature grouping} \label{sec:low_dimensional_metric_decomposition}

Section \ref{sec:formulations} discusses several regularizations on the Mahalanobis metric $\bM$ that lead to efficient computation of $\bM^{*}(\gamma)$, i.e., the solution to  (\ref{eqn:fgamma}), requiring $O(d^2)$ computations. This, though linear in the size of $\bM$, may be prohibitive for  high-dimensional data. Here, we discuss a particular low-dimensional modeling technique that addresses this computational issue. We consider our Mahalanobis metric $\bM$ to be of the {\it general} form 
\begin{equation}\label{eqn:factorizeMahalanobis}
    \bM=\bB\otimes\bI_{d_1},
\end{equation}
where $\otimes$ denotes the Kronecker product, $\bI_{d_1}$ is the identity matrix of size $d_1 \times d_1$, and  $\bB\succeq\bzero$ is a $r\times r$ symmetric positive semi-definite matrix with $d=d_1r$, where $r \ll d$. 
The modeling (\ref{eqn:factorizeMahalanobis}) reformulates the objective in (\ref{eqn:fgamma}) as
\begin{equation}\label{eqn:reshaping}
\begin{array}{l}
    \inner{\bV_\gamma,\bB\otimes\bI_{d_1}} = \inner{\sum_{ij}\gamma_{ij}(\bs_i-\bt_j)(\bs_i-\bt_j)^\top,\bB\otimes\bI_{d_1}}\\
    \qquad\qquad\qquad= \inner{\sum_{ij}\gamma_{ij}(\bS_i-\bT_j)^\top(\bS_i-\bT_j),\bB}\\
    \qquad\qquad\qquad= \inner{\bU_\gamma, \bB},
\end{array}
\end{equation}
where $\bS_i$ and $\bT_j$ are $d_1\times r$ matrices obtained by reshaping the vectors $\bs_i$ and $\bt_j$, respectively.

We observe that the proposed modeling (\ref{eqn:factorizeMahalanobis}) divides $d$ features into $r$ groups, each with $d_1$ features. Based on (\ref{eqn:reshaping}), the symmetric positive semi-definite matrix $\bB$ may be viewed as a Mahalanobis metric over the {\it feature groups}. In addition, it can be shown that any proposed regularization on the metric $\bM$ (in Section~\ref{sec:formulations}) transforms into an equivalent regularization on the ``group'' metric $\bB$. Equivalently, our robust OT problem is with $\gamma$ and 
$\bB$ (and not $\bM$). In case there is no feature grouping ($d_1 = 1$), learning of $\bB$ is same as $\bM$ and there is no computational benefit.

The function $f(\gamma)$, in the robust optimal transport problem (\ref{eqn:MahalanobisRobustOT}), is equivalently re-written as
\begin{equation}\label{eqn:factorizeMahalanobisRobustOT}
    f(\gamma) = \max_{\bB\in\calB}\ \inner{\bU_\gamma,\bB},
\end{equation}
where $\calB=\{\bB:\bB\succeq\bzero {\rm \ and \ } \Omega(\bB) \leq 1\}$. For the regularizers discussed in Section \ref{sec:formulations}, the computation of the solution $\bB^*(\gamma)$ of (\ref{eqn:factorizeMahalanobisRobustOT}) costs $O(r^2)$.

\subsection{Frank-Wolfe algorithm for (\ref{eqn:MahalanobisRobustOT})}\label{sec:algorithms}


A popular way to solve a convex constrained optimization problem (\ref{eqn:MahalanobisRobustOT}) is with the Frank-Wolfe (FW) algorithm, which is also known as the conditional gradient algorithm \cite{jaggi2013revisiting}. It requires solving a constrained linear minimization sub-problem (LMO) at every iteration. The LMO step boils down to solving the standard optimal transport problem (\ref{eqn:optimal-transport}). When regularized with an entropy regularization term, the LMO step admits a computationally efficient solution using the popular Sinkhorn algorithm \cite{cuturi13a}. The FW algorithm  for (\ref{eqn:MahalanobisRobustOT}) is shown in Algorithm \ref{alg:FW}, which only involves the gradient of the function.

\begin{algorithm}[tb]
   \caption{Proposed FW algorithm for (\ref{eqn:MahalanobisRobustOT})}
   \label{alg:FW}
   {
  {\bfseries Input:} Source distribution's samples $\{\bs_i\}_{i=1}^m\in\R^{d}$ and target distribution's samples $\{\bt_j\}_{j=1}^n\in\R^{d}$. Initialize $\gamma_0 \in \Pi(\mu_1, \mu_2)$. 
   
  {\bfseries for} $t = 0\ldots T$ {\bfseries do} 
  
  \qquad Compute $\nabla f_{\gamma}(\gamma_t)$ using Lemma \ref{lemma:gradient_and_directional_derivative}.

  \qquad LMO step: Compute $\hat{\gamma}_t\coloneqq \argmin_{\beta \in \Pi(\bp,\bq)} \langle \beta, \nabla f_{\gamma}(\gamma_t)\rangle$.

  \qquad Update $\gamma_{t+1} = (1-\theta)\gamma_t + \theta \hat{\gamma}_t$ for $\theta = \frac{2}{t+2}$.
 
  {\bfseries end for}

{\bfseries Output:} $\gamma^*$ and $\bM^* = \bM^*(\gamma^*)$.
    }
\end{algorithm}

We now show the computation of gradient $\nabla_{\gamma} f$ of (\ref{eqn:MahalanobisRobustOT}). We begin by noting that $\bV_\gamma = \bZ {\rm Diag}({\rm vec}(\gamma)) \bZ^\top$, where $\bZ$ is a $d\times mn$ matrix with $(i,j)$-th column as $(\bs_i-\bt_j)$, ${\rm Diag}(\cdot)$ acts on a vector and outputs the corresponding diagonal matrix, and ${\rm vec}(\cdot)$ vectorizes a matrix in the column-major order. 

\begin{Lem}\label{lemma:gradient_and_directional_derivative}
Let $\bM^*(\gamma_t)$ be the solution of the problem $\maxop_{\bM \in \mathcal{M}} \langle \bV_{\gamma_t},  \bM\rangle$ for a given $\gamma_t \in \Pi(\bp,\bq)$. Then, the gradient $\nabla_{\gamma} f (\gamma_t)$ of $f(\gamma)$ in (\ref{eqn:MahalanobisRobustOT}) with respect to $\gamma$ at $\gamma_t$ is
\begin{equation*}
    \nabla_{\gamma} f(\gamma_t) = {\rm vec}^{-1}({\rm diag}(\bZ^\top \bM^*(\gamma_t)\bZ)), 
\end{equation*}
where ${\rm diag}(\cdot)$ extracts the diagonal (vector) of a square matrix and ${\rm vec}^{-1}$ reshapes a vector into a matrix. 

\end{Lem}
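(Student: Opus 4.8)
The plan is to compute the gradient of $f(\gamma) = \max_{\bM \in \calM} \inner{\bV_\gamma, \bM}$ by combining Danskin's theorem with the chain rule through the linear map $\gamma \mapsto \bV_\gamma$. First I would invoke Danskin's theorem (the envelope theorem for max-functions): since $f$ is the pointwise maximum over the compact convex set $\calM$ of functions that are linear in $\gamma$, and since the maximizer $\bM^*(\gamma_t)$ is assumed given (and, by the convexity structure established earlier, unique at the optimum), the gradient of $f$ at $\gamma_t$ equals the gradient of the inner objective $\inner{\bV_\gamma, \bM}$ evaluated at the fixed maximizer $\bM = \bM^*(\gamma_t)$. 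This reduces the problem to differentiating the \emph{linear} functional $\gamma \mapsto \inner{\bV_\gamma, \bM^*(\gamma_t)}$ with respect to $\gamma$, treating $\bM^*(\gamma_t)$ as a constant.

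Next I would exploit the factorization $\bV_\gamma = \bZ\,{\rm Diag}({\rm vec}(\gamma))\,\bZ^\top$ stated just before the lemma. Substituting this into the inner product gives
\begin{equation*}
\inner{\bV_\gamma, \bM^*(\gamma_t)} = \inner{\bZ\,{\rm Diag}({\rm vec}(\gamma))\,\bZ^\top,\ \bM^*(\gamma_t)} = \tr\!\left(\bZ^\top \bM^*(\gamma_t)\,\bZ\,{\rm Diag}({\rm vec}(\gamma))\right),
\end{equation*}
using cyclicity of the trace and symmetry of $\bM^*(\gamma_t)$. Since $\tr(\bA\,{\rm Diag}(\bx))$ is linear in $\bx$ with coefficient vector ${\rm diag}(\bA)$, this functional is linear in ${\rm vec}(\gamma)$ with gradient vector ${\rm diag}(\bZ^\top \bM^*(\gamma_t)\,\bZ)$. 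Reshaping back from vector form to the $m\times n$ matrix via ${\rm vec}^{-1}$ yields exactly the claimed expression
\begin{equation*}
\nabla_\gamma f(\gamma_t) = {\rm vec}^{-1}\!\left({\rm diag}\!\left(\bZ^\top \bM^*(\gamma_t)\,\bZ\right)\right).
\end{equation*}

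I would verify the reshaping bookkeeping directly: the $(i,j)$ entry of $\nabla_\gamma f$ should equal $(\bs_i-\bt_j)^\top \bM^*(\gamma_t)(\bs_i-\bt_j)$, which is precisely the diagonal entry of $\bZ^\top \bM^*(\gamma_t)\bZ$ in the column indexed by the pair $(i,j)$ under the column-major vectorization convention. This serves as the sanity check that the ${\rm vec}$/${\rm vec}^{-1}$ and ${\rm Diag}$/${\rm diag}$ conventions are consistent. The main obstacle is the rigorous application of Danskin's theorem: one must confirm that $f$ is differentiable (not merely subdifferentiable) at $\gamma_t$, which requires the maximizer $\bM^*(\gamma_t)$ to be unique. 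For the $p$-norm and KL regularizers this uniqueness follows from the explicit closed forms for $\bM^*(\gamma)$ given in Theorems~\ref{thm:p-norm} and~\ref{thm:kl-regularization}, so I would either cite those or note that differentiability holds wherever the maximizer is unique, with the subgradient formula holding in general. The trace-algebra portion is routine once Danskin's theorem fixes $\bM^*(\gamma_t)$.
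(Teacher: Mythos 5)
Your proof is correct and follows essentially the same route as the paper's: the paper's proof is a one-line appeal to Danskin's theorem combined with the structure $\bV_\gamma = \bZ\,{\rm Diag}({\rm vec}(\gamma))\,\bZ^\top$, which is exactly what you carry out in detail. Your additional remarks on the trace algebra, the index bookkeeping, and the uniqueness condition needed for differentiability (versus mere subdifferentiability) are sound elaborations of what the paper leaves implicit.
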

\begin{proof}
The proof follows directly from the Danskin's theorem \cite{bertsekas1995nonlinear} and exploits the structure of $\bV_{\gamma}$.
\end{proof}

For the formulations presented in Section \ref{sec:formulations}, the computation of $\nabla_{\gamma} f$ costs $O(mnd^2)$. 
For the cost function (\ref{eqn:factorizeMahalanobisRobustOT}) obtained with the $r$-dimensional modeling of $\bM$, Lemma \ref{lemma:gradient_and_directional_derivative} can be appropriately modified. In this case, the cost of computation of $\nabla_{\gamma} f $ is $O(mnr^2)$.

\section{Learning with robust optimal transport loss}\label{sec:ROTloss}

In this section, we discuss the discriminative learning setup \cite{frogner15a} and the suitability of the proposed robust OT distances as a loss function in the learning setup. 

\subsection{Problem setup}
Consider the standard multi-label (or multi-class) problem over $L$ labels (classes) and given supervised training instances $\{\bx_j,\by_j\}_{j=1}^N$. Here, $\bx_j\in\R^M$ and $\by_j\in\{0,1\}^L$. The prediction function of $p$-th label is given by the {\it softmax} function
\begin{equation}\label{eqn:softmax}
    h_p(\bx;\bW)=\frac{e^{\inner{\bw_p,\bx}}}{\sum_{l=1}^L e^{\inner{\bw_l,\bx}}},
\end{equation}
where $\bW=[\bw_1,\ldots,\bw_L]$ is the model parameter of the multi-label problem. The prediction function $h(\bx;\bW)$ can be learned via the empirical risk minimization framework. 

Frogner et al.~\cite{frogner15a} propose employing the OT distance (\ref{eqn:optimal-transport-discrete}) as the loss function for multi-label classification problem as follows. For an input $\bx_j$, the prediction function $h(\bx_j;\bW)$ in (\ref{eqn:softmax}) may be viewed as a discrete probability distribution. Similarly, a $0-1$ binary ground-truth label vector $\by_j$ may be transformed into a discrete probability distribution by appropriate normalization: $\hat{\by}_j=\by_j/\by_j^\top \bone$. Given a suitable ground cost metric between the labels, the Wasserstein distance is employed to measure the distance between the prediction $h(\bx_j;\bW)$ and the ground-truth $\hat{\by}_j$. If the labels correspond to real-word entities, then a possible ground cost metric may be obtained from the word embedding vectors corresponding to the labels \cite{mikolov13a,bojanowski16a}. 

\begin{figure}[t]
\center
\includegraphics[width=.7\textwidth]{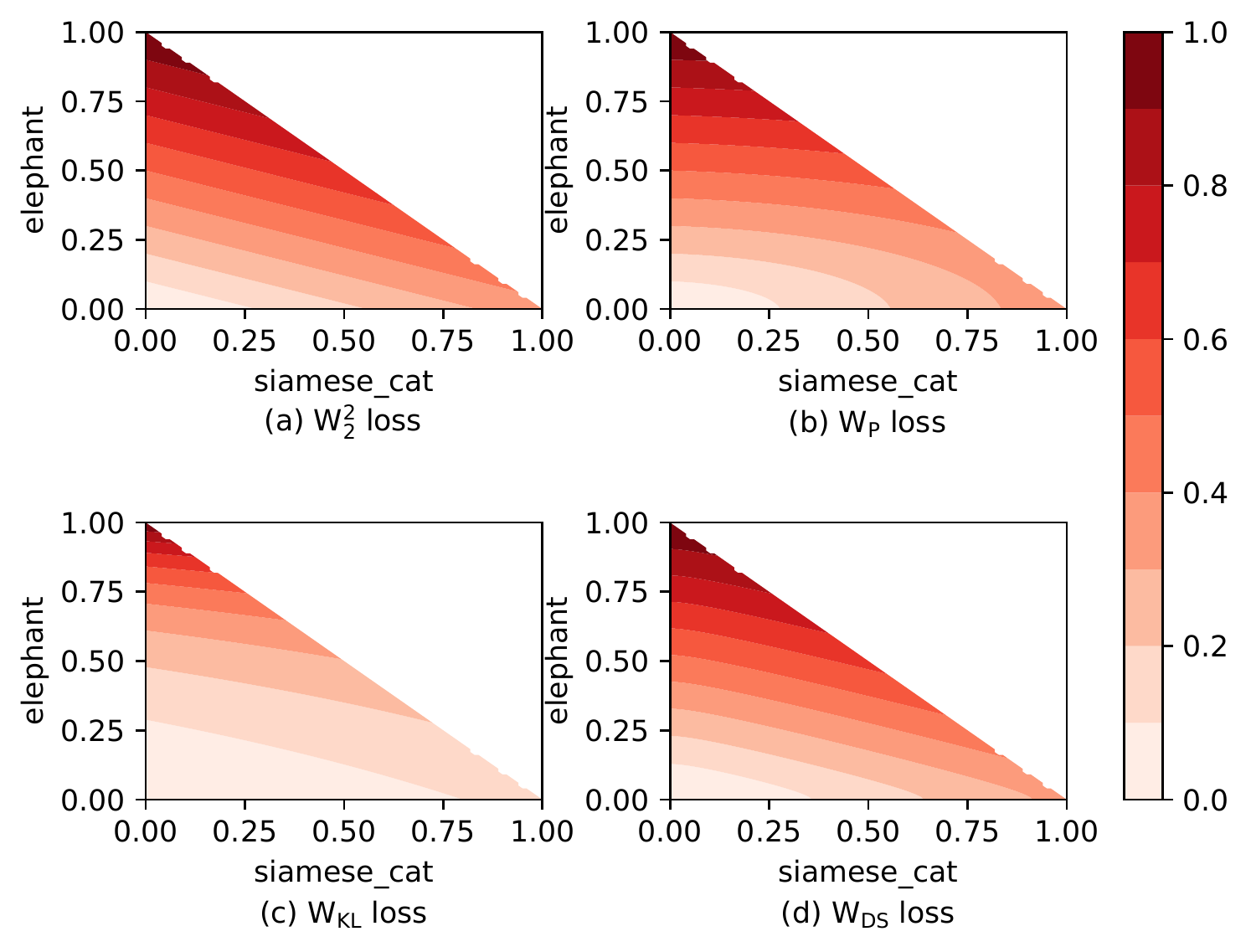}
\caption{Contour plots of various OT distance based loss functions. The labels are \{`Siamese$\_$cat', `Elephant', `Persian$\_$cat'\}, with `Persian$\_$cat' as the true label. The plots show the relative loss incurred by various possible predictions of the form $\bh=[x,y,1-x-y]^\top$, where $0\leq x,y\leq1$ and $x+y\leq 1$. We observe linear contours for the ${\rm W}_2^2$ loss function while non-linear contours for the proposed robust OT loss functions: ${\rm W}_{\rm P}$ with $k=1$, ${\rm W}_{\rm KL}$, and ${\rm W}_{\rm DS}$. The robust OT loss functions relatively penalize `Siamese$\_$cat' lesser than `Elephant', i.e., more lighter color in the feasible $x>y$ regions in plots (b), (c), and (d) than in plot (a). }\label{fig:cce}
\end{figure}



\subsection{Multi-label learning with the ${\rm W}_{\rm ROT}$ loss}
We propose to employ the robust OT distance-based loss in multi-label/multi-class problems. To this end, we solve the empirical risk minimization problem, i.e.,
\begin{equation}\label{eqn:multi-label}
    \minop_{\bW\in\R^{M\times L}}\ \frac{1}{N} \sum_{j=1}^N{\rm W}_{\rm ROT}(h(\bx_j;\bW),\hat{\by}_j),
\end{equation}
where ${\rm W}_{\rm ROT}$ is the robust OT distance-based function  (\ref{eqn:MahalanobisRobustOT}). Here, ${\rm W}_{\rm ROT}$ may be set to any of the discussed robust OT distance functions such as ${\rm W}_{\rm P}$ (\ref{eqn:proposed-p-norm-min}), ${\rm W}_{\rm KL}$ (\ref{eqn:proposed-min-kl-divergence}), and ${\rm W}_{\rm DS}$ (\ref{eqn:proposed-doubly-stochastic}). As discussed, \cite{frogner15a} employs ${\rm W}_p^p(h(\bx;\bW),\hat{\by})$ as the loss function in (\ref{eqn:multi-label}).

We analyze the nature of the ${\rm W}_2^2$-based loss function and the proposed ${\rm W}_{\rm ROT}$-based loss functions by viewing their contours plots for a three-class setting with labels as \{\textit{A},\textit{B},\textit{C}\}. We consider label \textit{C} as true label, i.e., $\hat{\by}=[0,0,1]^\top$, where the first dimension corresponds to label \textit{A} and consider predictions of the form $\bh=[a,b,1-a-b]$. Since $h$ is obtained from the softmax function (\ref{eqn:softmax}), we have $(a,b) \in \{(x,y):0\leq x,y\leq 1 {\rm \ and \ }x+y\leq 1\}$. The ground cost function is computed using the fastText word embeddings corresponding to the labels \cite{bojanowski16a}. We plot the contour maps of ${\rm W}_2^2(\bh,\hat{\by})$ and the proposed ${\rm W}_{\rm P}(\bh,\hat{\by})$, ${\rm W}_{\rm KL}(\bh,\hat{\by})$, and ${\rm W}_{\rm DS}(\bh,\hat{\by})$, as $(a,b)$ varies along the two-dimensional $X-Y$ plane. All the plots are made to same scale by normalizing the highest value of the loss to $1$.

In Figure~\ref{fig:cce}, we consider the labels as \{\textit{A},\textit{B},\textit{C}\}=\{`Siamese$\_$cat', `Elephant', `Persian$\_$cat'\}. The first and the third classes in this setting are similar, and the OT distance based loss functions should exploit this relationship via the ground cost function. With `Persian$\_$cat' as the true class, we observe that all the four losses in Figure~\ref{fig:cce} penalize `Elephant' more than `Siamese$\_$cat'. However, the contours of ${\rm W}_2^2$ loss in  Figure~\ref{fig:cce}(a) are linear, while those of the proposed ${\rm W}_{\rm P}$ with $k=1$ Figure~\ref{fig:cce}(b) are elliptical. On the other hand, the proposed ${\rm W}_{\rm KL}$ and ${\rm W}_{\rm DS}$ exhibit varying degree of non-linear contours. 
Overall, the proposed robust OT distance based loss functions enjoy more flexibility in the degree of penalization than the ${\rm W}_2^2$ loss, which may be helpful in certain learning settings. 


\subsection{Optimizing ${\rm W}_{\rm ROT}$ loss}

We solve Problem (\ref{eqn:multi-label}) using the standard stochastic gradient descent (SGD) algorithm. In each iteration of the SGD algorithm, we pick a training instance $j = \{1, \ldots, N \}$ and update the parameter $\bW$ along the negative of the gradient of the loss term ${\rm W}_{\rm ROT}(h(\bx_j;\bW),\hat{\by}_j))$ with respect to the model parameter $\bW$. 

We obtain it using the chain rule by computing $\nabla_\bW h(\bx_j;\bW)$ and $\nabla_{h(\bx;\bW)} {\rm W}_{\rm ROT}(h(\bx_j;\bW),\hat{\by}_j)$. While the expression for $\nabla_\bW h(\bx_j;\bW)$ is well studied, computing $\nabla_{h(\bx;\bW)} {\rm W}_{\rm ROT}(h(\bx_j;\bW),\hat{\by}_j)$ is non trivial as the loss ${\rm W}_{\rm ROT}(h(\bx_j;\bW),\hat{\by}_j)$ involves a min-max optimization problem  (\ref{eqn:MahalanobisRobustOT}). To this end, we consider a regularized version of ${\rm W}_{\rm ROT}$ by adding a negative entropy regularization term to (\ref{eqn:fgamma}). Equivalently, we consider the formulation for computing the robust OT distance between $h(\bx_j;
\bW)$ and $\hat{\by}_j$ as
\begin{equation}\label{eqn:rotminmax}
\begin{array}{ll}
  {\rm W}_{\rm ROT}(h(\bx_j;\bW),\hat{\by}_j) = \\
  \minop_{\gamma \in \Pi(h(\bx_j;\bW),\hat{\by}_j)} \maxop_{\bM \in \calM} \inner{\bV_\gamma,\bM} \\
  \qquad \qquad \qquad \qquad \qquad + \lambda_{\gamma} \sum_{pq} \gamma_{pq} {\rm ln}(\gamma_{pq}),
  \end{array}
\end{equation}
where $\lambda_{\gamma} >0$ and $\bV_{\gamma} = \sum_{pq} (\bl_p - \bl_q)(\bl_p - \bl_q)^\top \gamma_{pq}$ is of size $d \times d$. Here, $\bl_p$ is the ground embedding of $p$-th label of dimension $d$. The following lemma provides the expression for the gradient of ${\rm W}_{\rm ROT}(h(\bx_j;\bW),\hat{\by}_j)$ with respect to $h(\bx; \bW)$.

\begin{Lem}\label{lemma:gradient_rot}
Let $(\gamma^{*}, \bM^*(\gamma^*))$ denote the optimal solution of the robust OT problem (\ref{eqn:rotminmax}). Then, 
\begin{equation}\label{eqn:gradient_rot}
\begin{array}{lll}
\nabla_{h(\bx;\bW)} {\rm W}_{\rm ROT}(h(\bx_j;\bW),\hat{\by}_j) =  \frac{1}{L}\bA \bone - \frac{\bone ^\top \bA \bone}{L^2} \bone,
\end{array}
\end{equation}
where $\bone $ is the column vector of ones of size $L$, $\bA = \bC^*+ \lambda_{\gamma } ({\rm ln}(\gamma ^*) + \bone \bone^\top)$, and $\bC^*(\bl_p,\bl_q) = (\bl_p - \bl_q)^\top \bM^*(\gamma^*) (\bl_p - \bl_q)$ is of size $L \times L$.
\end{Lem}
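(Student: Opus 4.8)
The plan is to treat ${\rm W}_{\rm ROT}(\bh,\hat{\by}_j)$, where I write $\bh=h(\bx_j;\bW)$ for the source marginal, as the optimal value of a parametric convex program in which the parameter $\bh$ enters only through the source-marginal constraint $\gamma\bone=\bh$, and then to read off the gradient as the associated Lagrange multiplier via the envelope (sensitivity) theorem. First I would dispose of the inner maximization over $\bM$. By Danskin's theorem, exactly as in Lemma~\ref{lemma:gradient_and_directional_derivative}, and since the regularizers of Section~\ref{sec:formulations} yield a \emph{unique} maximizer $\bM^{*}(\gamma)$, the map $\gamma\mapsto\maxop_{\bM\in\calM}\inner{\bV_\gamma,\bM}$ is differentiable at $\gamma^{*}$ with gradient $\bC^{*}$, where $\bC^{*}_{pq}=(\bl_p-\bl_q)^\top\bM^{*}(\gamma^{*})(\bl_p-\bl_q)$; this is because $\inner{\bV_\gamma,\bM}=\sum_{pq}\gamma_{pq}(\bl_p-\bl_q)^\top\bM(\bl_p-\bl_q)$ is linear in $\gamma$ for fixed $\bM$. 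Adding the entropy term, whose partial derivative in $\gamma_{pq}$ is $\lambda_{\gamma}(\ln\gamma_{pq}+1)$, shows that the smooth, strictly convex inner objective $G(\gamma)$ satisfies $\nabla_\gamma G(\gamma^{*})=\bC^{*}+\lambda_{\gamma}(\ln\gamma^{*}+\bone\bone^\top)=\bA$.

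Next I would form the Lagrangian of the outer minimization, $\mathcal L(\gamma,\alpha,\beta)=G(\gamma)+\alpha^\top(\bh-\gamma\bone)+\beta^\top(\hat{\by}_j-\gamma^\top\bone)$, with $\alpha,\beta\in\R^{L}$ the multipliers for the two marginal constraints; the entropy term forces $\gamma^{*}>\bzero$ entrywise, so the non-negativity constraint is inactive and may be dropped. Stationarity in $\gamma$ yields the KKT identity $[\nabla_\gamma G(\gamma^{*})]_{pq}=\alpha^{*}_p+\beta^{*}_q$, i.e. $\bA=\alpha^{*}\bone^\top+\bone(\beta^{*})^\top$, while the envelope theorem gives $\nabla_\bh{\rm W}_{\rm ROT}=\alpha^{*}$, the multiplier attached to $\gamma\bone=\bh$.

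The pair $(\alpha^{*},\beta^{*})$ is determined only up to the gauge freedom $\alpha^{*}\mapsto\alpha^{*}+c\bone$, $\beta^{*}\mapsto\beta^{*}-c\bone$, reflecting that $\bh$ lives on the probability simplex so $\nabla_\bh{\rm W}_{\rm ROT}$ is defined only modulo $\bone$; I would pin down the canonical mean-zero representative. A direct computation using $\bA=\alpha^{*}\bone^\top+\bone(\beta^{*})^\top$ gives $\frac{1}{L}\bA\bone-\frac{\bone^\top\bA\bone}{L^{2}}\bone=\alpha^{*}-\frac{\bone^\top\alpha^{*}}{L}\bone$, which is precisely the projection of $\alpha^{*}$ onto $\{\bv:\bone^\top\bv=0\}$. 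This establishes~(\ref{eqn:gradient_rot}) while expressing the answer purely through the directly computable matrix $\bA$, bypassing the non-unique multipliers. I would close by noting that the choice of representative is harmless downstream: the softmax Jacobian $J_{pq}=\partial h_p/\partial z_q=h_p(\delta_{pq}-h_q)$ obeys $J^\top\bone=\bzero$, so any $\bone$-component of the gradient is annihilated when the chain rule propagates it to $\bW$.

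The main obstacle I anticipate is the rigorous justification of the envelope step rather than the algebra: one must argue that the value function is differentiable in $\bh$ (guaranteed here by strict convexity of $G$, which makes $\gamma^{*}$ unique and the program nondegenerate) and must treat the dual non-uniqueness carefully, verifying that the stated expression is exactly the gauge-invariant quantity so that~(\ref{eqn:gradient_rot}) is unambiguous. A secondary point is confirming differentiability of the inner value $\maxop_{\bM}\inner{\bV_\gamma,\bM}$ at $\gamma^{*}$, which rests on the uniqueness of $\bM^{*}(\gamma^{*})$ supplied by the closed-form solutions of Section~\ref{sec:formulations}.
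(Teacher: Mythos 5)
Your proposal is correct and follows essentially the same route as the paper's proof: dualize the marginal constraints, identify $\nabla_{\bh}{\rm W}_{\rm ROT}$ with the multiplier of $\gamma\bone=\bh$ via sensitivity/strong duality, use KKT stationarity (with the nonnegativity multiplier vanishing since entropy forces $\gamma^*>\bzero$) to get $\bA=\alpha^*\bone^\top+\bone(\beta^*)^\top$, and eliminate the second multiplier by pre- and post-multiplication with $\bone$ to isolate the unique mean-zero representative. Your added remarks on the gauge freedom being annihilated by the softmax Jacobian and on the uniqueness of $\bM^*(\gamma)$ needed for Danskin's theorem are correct refinements but do not change the argument.
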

\begin{proof}
The proof is in Appendix \ref{supp:sec:proof_gradient_rot}.
\end{proof}

For the multi-label setting, computation of the gradient $\nabla_{h(\bx;\bW)} {\rm W}_{\rm ROT}(h(\bx_j;\bW),\hat{\by}_j)$, shown in (\ref{eqn:gradient_rot}), in Lemma \ref{lemma:gradient_rot} costs $O(l_j Ld^2)$, where $l_j$ is the number of ground-truth labels for the $j$-th training instance. Using the modeling (\ref{eqn:factorizeMahalanobis}), the cost reduces to $O(l_jLr^2)$. In many cases, $l_j$ is much smaller than $L$. In the multi-class setting, the gradient computation cost reduces to costs $O(Ld^2)$ by setting $l_j = 1$. 
Using the modeling (\ref{eqn:factorizeMahalanobis}), it can be reduced to $O(Lr^2)$. 

Overall, in both multi-class/multi-label settings, the cost of the gradient computation in (\ref{eqn:gradient_rot}) scales linearly with the number of labels $L$ and quadratically with $r$. When $r \ll d$, optimization of the ${\rm W}_{\rm ROT}$ loss becomes computationally feasible for large-scale multi-class/multi-label instances.

\section{Experiments}\label{sec:experiments}
We evaluate the proposed robust optimal transport formulations in the supervised multi-class/multi-label setting discussed in Section~\ref{sec:ROTloss}. Our code is available at \url{https://github.com/satyadevntv/ROT4C}.


\subsection{Datasets and evaluation setup}
We experiment on the following three multi-class/multi-label datasets. 

\textbf{Animals} \cite{lampert13a}: This dataset contains $30\,475$ images of $50$ different animals. DeCAF features ($4096$ dimensions) of each image are available at \url{https://github.com/jindongwang/transferlearning/blob/master/data/dataset.md}. We randomly sample $10$ samples per class for training and the rest are used for evaluation.

\textbf{MNIST}: The MNIST handwritten digit dataset consists of images of digits $\{0,\ldots,9\}$. The images are of $28\times 28$ pixels, leading to $784$ features. The pixel values are normalized by dividing each dimension with $255$. We randomly sample $100$ images per class (digit) for training and $1000$ images per class for evaluation.

\textbf{Flickr} \cite{thomee16a}: The Yahoo/Flickr Creative Commons 100M dataset  consists of descriptive tags for around 100M images. We follow the experimental protocol in \cite{frogner15a} for the tag-prediction (multi-label) problem on $1\,000$ descriptive tags. The training and test sets consist of $10\,000$ randomly selected images associated with these tags. The features for images are extracted using MatConvNet \cite{vedaldi15a}. The train/test sets as well as the image features are available at \url{http://cbcl.mit.edu/wasserstein}.

\begin{table}[t]
  \caption{Performance of various OT distance based loss functions}\label{table:results}
  \begin{center}
    \begin{tabular}{lccccc}
      \toprule
      \multirow{2}{*}{Loss} & \multirow{2}{*}{$r$} & Animals & MNIST & \multicolumn{2}{c}{Flickr} \\ 
      & & (AUC) & (AUC) & (AUC) & (mAP) \\
      \midrule
      \multirow{1}{*}{${\rm W}_2^2$} \cite{frogner15a} & $-$ & $0.794$ & $0.848$ & $0.649$ & $0.0209$\\
      \addlinespace[0.3em]
      \multirow{1}{*}{${\rm SRW}_2^2$} \cite{paty19a} & $-$ & $0.802$ & $0.925$ & $0.734$ & $0.0370$\\
      \addlinespace[0.3em]
      \multirow{1}{*}{${\rm SRW}_5^2$} \cite{paty19a} & $-$ & $0.855$ & $0.860$ & $0.730$ & $0.0360$\\
      \addlinespace[0.3em]
      \multirow{1}{*}{${\rm SRW}_{10}^2$} \cite{paty19a} & $-$ & $0.860$ & $0.788$ & $0.721$ & $0.0350$\\
      \addlinespace[0.3em]
      \multirow{1}{*}{${\rm SRW}_{20}^2$} \cite{paty19a} & $-$ & $0.850$ & $0.788$ & $0.719$ & $0.0352$\\
      \addlinespace[0.3em]
      \multirow{1}{*}{${\rm RKP}_{4/3}$} \cite{dhouib20a} & $-$ & $0.854$ & $0.914$ & $0.732$ & $0.0366$\\
      \addlinespace[0.3em]
      \multirow{1}{*}{${\rm RKP}_{4}$} \cite{dhouib20a} & $-$ & $0.850$ & $0.860$ & $0.709$ & $0.0350$\\
      \addlinespace[0.3em]
      \multirow{1}{*}{${\rm RKP}_{\infty}$} \cite{dhouib20a} & $-$ & $0.837$ & $0.921$ & $0.735$ & $0.0370$\\
      \midrule
      \multirow{3}{*}{${\rm W_{P}}$, $k=1$  (Eq.~\ref{eqn:proposed-p-norm-min})} & $5$ & $0.801$ & $\mathbf{0.959}$ & $0.706$ & $0.0481$\\
      & $10$ & $0.907$ & $0.940$ & $0.745$  & $0.0625$\\
      & $20$ & $\mathbf{0.908}$ & $0.922$ & $\mathbf{0.768}$  & $\mathbf{0.0717}$\\
      \midrule
      \multirow{3}{*}{${\rm W_{P}}$, $k=2$ (Eq.~\ref{eqn:proposed-p-norm-min})} & $5$ & $0.890$ & $0.804$ & $0.760$  & $0.0625$\\
      & $10$ & $0.878$ & $0.702$ & $0.742$  & $0.0468$\\
      & $20$ & $0.874$ & $0.603$ & $0.737$  & $0.0441$\\
      \midrule
      \multirow{3}{*}{${\rm W_{KL}}$ (Eq.~\ref{eqn:proposed-min-kl-divergence})} & $5$ & $0.794$ & $0.867$ & $0.648$  & $0.0207$\\
      & $10$ & $0.749$ & $0.870$ & $0.649$  & $0.0205$\\
      & $20$ & $0.794$ & $0.779$ & $0.649$  & $0.0205$\\
      \midrule
      \multirow{3}{*}{${\rm W_{DS}}$ (Eq.~\ref{eqn:proposed-doubly-stochastic})} & $5$ & $\mathbf{0.908}$ & $0.927$ & $0.724$  & $0.0530$\\
      & $10$ & $0.878$ & $0.873$ & $0.668$  & $0.0325$\\
      & $20$ & $0.845$ & $0.813$ & $0.628$  & $0.0219$\\
      \bottomrule
    \end{tabular}
  \end{center}
\end{table}

\textbf{Experimental setup and baselines}: As described in Section~\ref{sec:ROTloss}, we use the fastText word embeddings \cite{bojanowski16a} corresponding to the labels for computing the OT ground metric in all our evaluations. 
We report the standard AUC metric for all the experiments. For the Flickr tag-prediction problem, we additionally report the mAP (mean average precision) metric. As the datasets are high dimensional, we use the low-dimensional modeling of the Mahalanobis metric $\bM$ (Section~\ref{sec:low_dimensional_metric_decomposition}) for the proposed robust OT distance-based loss functions by randomly grouping the features. We experiment with $r\in\{5,10,20\}$. 
We also report results with the following OT distance-based loss functions as baselines:
\begin{itemize}
    \item ${\rm W}_2^2$: the $2$-Wasserstein distance (\ref{eqn:W22distance}).
    \item ${{\rm SRW}}_k^2$: the subspace robust Wasserstein distance (\ref{eqn:SRW1}) proposed by Paty et al.~\cite{paty19a}. We obtain results for several hyper-parameter values $k=\{2,5,10,20\}$.
    \item ${\rm RKP}_\rho$: this class of robust Wasserstein distance \cite{dhouib20a} is parameterized by Schatten $\rho$-norm regularized Mahalanobis metric. We obtain results for several hyper-parameter values $\rho=\{4/3,4,\infty\}$. As shown in \cite{dhouib20a}, ${\rm RKP}_1={\rm W}_2^2$ and ${\rm RKP}_\infty={{\rm SRW}}_1^2$. 
\end{itemize}
Our experiments are run on a machine with 32 core Intel CPU ($2.1$ GHz Xeon) and a single NVIDIA GeForce RTX 2080 Ti GPU ($11$ GB). 
The model computations for all algorithms on all datasets are performed on the GPU, except on Flickr, where the baselines ${\rm RKP}_\rho$ and ${{\rm SRW}}_k^2$ have only CPU-based implementation. This is because the memory requirement for ${\rm RKP}_\rho$ and ${{\rm SRW}}_k^2$ on Flickr is too large for our GPU. 
The low-dimensional modeling for the proposed robust OT distances (with $r\ll d$) in Section~\ref{sec:low_dimensional_metric_decomposition} makes GPU implementation feasible on all datasets for ${\rm W_{P}}$, ${\rm W_{KL}}$, and ${\rm W_{DS}}$. Additional experimental details are in Appendix \ref{supp:sec:experiments}.

\begin{figure}[t]
\center
\includegraphics[width=.6\textwidth]{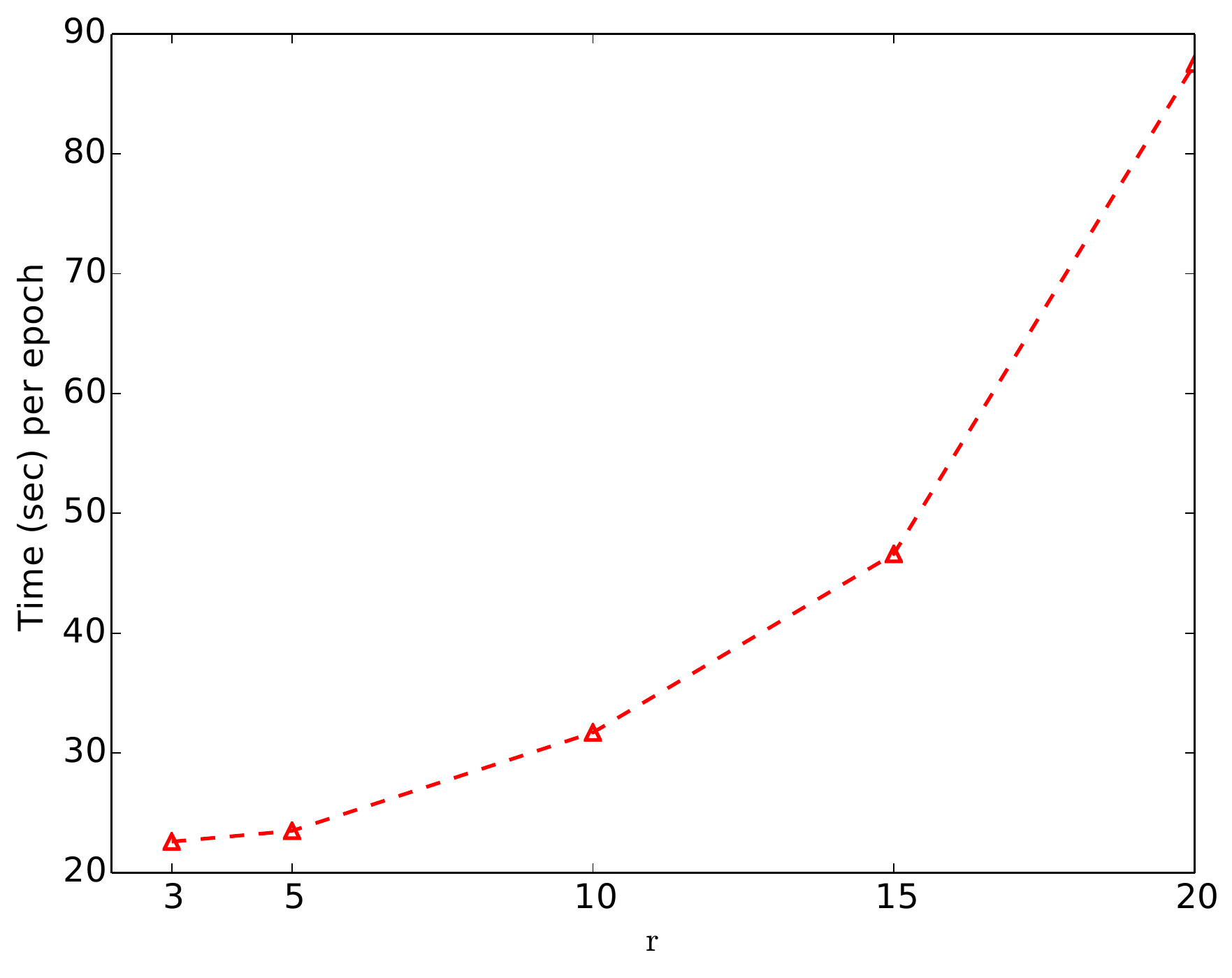}
\caption{Time taken per epoch versus $r$ on the Flickr dataset by our algorithm with the proposed ${\rm W}_{\rm P}$ loss with $k=1$.}\label{fig:timing}
\end{figure}

\subsection{Results and discussion}

\textbf{Generalization performance}: We report the results of our experiments in Table~\ref{table:results}. Overall, we observe that the proposed robust OT distance based loss functions provide better generalization performance than the considered baselines. We also evaluate the robustness of proposed OT distance-based loss functions with respect to randomized grouping of the features into $r$ groups. As discussed in Section~\ref{sec:low_dimensional_metric_decomposition}, the low-dimensional modeling of $\bM$ requires dividing the $d$ features into $r$ groups. Results on the proposed robust OT distance in Table~\ref{table:results} are obtained by one randomized grouping of features. In Table~\ref{appendix:table:robust_r_results} in Appendix~\ref{supp:sec:experiments}, we report the mean AUC and the corresponding standard deviation obtained across five random groupings of features on the same train-test split (for the smaller Animals and MNIST datasets). We observe that the results across various randomized feature groupings is quite stable, signifying the robustness of the proposed OT distance-based loss functions with regards to groupings of features. In Section \ref{supp:movies_dataset}, we additionally show comparative results of the SRW and ${\rm W_P}$ distance on a movies dataset.


\textbf{Computation timing}: We also report the time taken per epoch of the SGD algorithm on the Flickr dataset with ${\rm W}_{\rm P}$ $(k=1)$ as the loss function. From Figure~\ref{fig:timing}, we observe that our model scales gracefully as $O(r^2)$ as discussed in Section~\ref{sec:algorithms}. 
We next compare per-epoch time take by our ${\rm W_{P}}$ loss with $r=20$ against those taken by the robust OT baselines ${{\rm SRW}}_k^2$ and ${\rm RKP}_\rho$. The per-epoch time taken by ${{\rm SRW}}_k^2$ and ${\rm RKP}_\rho$ is $14$-$16$s for Animals, $23$-$26$s for MNIST, and around $12.5$ hours for Flickr. The per-epoch time taken by our W$_{\rm P}$ loss with $r=20$ is $0.3$s (Animals), $0.5$s (MNIST), and $87.5$s (Flickr). Hence, on Animals and MNIST datasets, computing the proposed robust OT distances is at least $45$ times faster than both ${{\rm SRW}}_k^2$ and ${\rm RKP}_\rho$. 
This is due to a) efficient computation of optimal $\bM$ (Section~\ref{sec:formulations}) and b) low-dimensional modeling of $\mathbf{M}$ (Section~\ref{sec:low_dimensional_metric_decomposition}).

\section{Conclusion}
We discussed robust optimal transport problems arising from Mahalanobis parameterized cost functions. A particular focus was on discussing novel formulations that may be computed efficiently. We also proposed a low-dimensional modeling of the Mahalanobis metric for efficient computation of robust OT distances. 
An immediate outcome is that it allows the use the robust OT formulations in high-dimensional multi-class/multi-label settings. Results on real-world datasets demonstrate the efficacy of the proposed robust OT distances in learning problems. 



\appendix

\section{Proofs}\label{supp:sec:proofs}

\subsection{Proof of Theorem~\ref{thm:p-norm}} \label{supp:sec:proof_pnorm}

Consider (\ref{eqn:fgamma}) with $\calM=\{\bM:\bM\succeq\bzero, \norm{\bM}_{p}\leq1\}$, i.e.,
\begin{equation}\label{appendix:eqn:fgamma-p-norm}
    f(\gamma) = \maxop_{\bM\succeq\bzero,\norm{\bM}_{p}\leq 1}\ \inner{\bV_\gamma,\bM}.
\end{equation}
Now, consider the relaxed problem of (\ref{appendix:eqn:fgamma-p-norm}) {\it without} the PSD constraint:
\begin{equation}\label{appendix:eqn:fgamma-relaxed-p-norm}
    \bM^{*}(\gamma) = \argmax_{\norm{\bM}_{p}\leq 1}\ \inner{\bV_\gamma,\bM}.
\end{equation}
From the H{\"{o}}lder's inequality for vectors, the optimal solution of (\ref{appendix:eqn:fgamma-relaxed-p-norm}) is $\bM^{*}(\gamma)=\norm{\bV_\gamma}_{q}^{-q/p}(\bV_\gamma)^{\circ(q/p)},$ where $q$-norm is the dual of $p$-norm, i.e., $1/p+1/q=1$. 

For $k\in\mathbb{N}$ and  $p=\frac{2k}{2k-1}$, we have $q=\frac{1}{2k}$. Therefore, $\bM^{*}(\gamma)=\norm{\bV_\gamma}_{2k}^{1-2k}(\bV_\gamma)^{\circ(2k-1)}$. It should be noted that the $\bM^{*}(\gamma)$ for such $p$-norm is a symmetric PSD matrix (via the Schur product theorem) as $\bV_\gamma$ is a symmetric PSD matrix. Consequently,  $\bM^{*}(\gamma)$ is also the optimal solution of (\ref{appendix:eqn:fgamma-p-norm}). Substituting this $\bM^{*}(\gamma)$ in (\ref{appendix:eqn:fgamma-p-norm}) leads to (\ref{eqn:proposed-p-norm-min}), thereby proving Theorem \ref{thm:p-norm}. A similar result was proved in \cite[Theorem 4]{mjaw15a} in the context of multitask learning.

\subsection{Proof of Lemma~\ref{lemma:feature-selection}}\label{supp:sec:proof_simplex}
Consider the inner maximization problem in (\ref{eqn:feature-selection}): $\maxop_{\bm\in\Delta_d} \inner{\gamma,\bC_{\bm}}-\lambda_{\bm} {{\rm D}_{\rm KL}}(\bm,\bone)$. This involves the entropy regularization as well as the simplex constraint on $\bm$. Such problems are well studied in literature \cite{bental01a} and the optimal solution of the above can be obtained via the Lagrangian duality. The $i$-th coordinate of its optimal solution $\bm^{*}$ is $\bm^{*}(i)=e^{\bv_i/\lambda_\bm}/(\sum_{j=1}^d e^{\bv_j/\lambda_\bm})$, where $\bv_i$ is the $i$-th diagonal entry of the matrix $\bV_\gamma$. Plugging $\bm^{*}$ in the objective function of (\ref{eqn:feature-selection}) allows to equivalently write (\ref{eqn:feature-selection}) as 
\begin{equation}\label{appendix:eqn:feature-selection}
\minop_{\gamma\in\Pi(\bp,\bq)}\ \lambda_\bm (\ln{(\sum_{i} e^{\bv_i/\lambda_\bm})} - (d-1)).
\end{equation}
Now consider the objective in (\ref{eqn:proposed-min-kl-divergence}) with $\bM_0=\bI$ and compare it with the objective in (\ref{appendix:eqn:feature-selection}) after leaving out the constants. We observe that the objective in (\ref{appendix:eqn:feature-selection}) is the  natural logarithm of the objective in (\ref{eqn:proposed-min-kl-divergence}). Since logarithm is an increasing function, the argmin solutions of (\ref{eqn:proposed-min-kl-divergence}) and (\ref{appendix:eqn:feature-selection}) are be the same.

\subsection{Proof of Theorem~\ref{thm:doubly-stochastic-regularization}} \label{supp:sec:proof_DS}
Consider a relaxed variant of (\ref{eqn:proposed-minimax-doubly-stochastic}) without the PSD constraint:
\begin{equation}\label{appendix:eqn:lightspeed}
\begin{array}{l}
\bM^{*}(\gamma) 
= 
\argmax\limits_{\bM\geq\bzero,\bM\bone=\bone,\bM^\top\bone=\bone} \inner{\bV_\gamma,\bM} \\
\qquad \qquad \qquad \qquad \qquad \qquad  - \lambda_{\bM}{{\rm D}_{\rm KL}}(\bM,\bM_0).
\end{array}
\end{equation}

Problem (\ref{appendix:eqn:lightspeed}) has a form similar to the entropic-regularized optimal transport problem studied in \cite{cuturi13a}, but with a symmetric cost matrix as $-(\bV_{\gamma}+\ln(\bM_0))$. As discussed in \cite{cuturi13a}, the solution of the the entropic-regularized optimal transport problem can be efficiently obtained using the Sinkhorn-Knopp algorithm \cite{knight08a}. Due to the symmetric cost matrix, the optimal solution of (\ref{appendix:eqn:lightspeed}) via the Sinkhorn-Knopp algorithm has the following form \cite{knight08a}: 
\begin{equation}\label{supp:eqn:M_form_DS}
\bM^{*}(\gamma)=\bD\left( \bM_0 \odot e^{\circ(\bV_\gamma/\lambda_{\bM})}\right)\bD,
\end{equation}
where $\bD$ is a diagonal matrix with positive entries \cite[Lemma 2 proof]{cuturi13a} and \cite[Section 3]{knight08a}. 

It should be noted that the optimal solution $\bM^{*}(\gamma)$ of (\ref{appendix:eqn:lightspeed}), whose expression is given in (\ref{supp:eqn:M_form_DS}), is also a symmetric PSD matrix as $\bM_0 \odot e^{\circ(\bV_\gamma/\lambda_{\bM})}$ is a symmetric PSD matrix (refer to the proof of Theorem~\ref{thm:kl-regularization}). Hence, $\bM^{*}(\gamma)$ is also the solution of (\ref{eqn:proposed-minimax-doubly-stochastic}).



\subsection{Proof of Lemma \ref{lemma:gradient_rot}} \label{supp:sec:proof_gradient_rot}

Denoting $\mu_1 = h(\bx_j;\bW)$ and  $\mu_2 = \hat{\by}_j$, we have 
\begin{equation}\label{supp:eqn:rotdual}
\begin{array}{lll}
  {\rm W}_{\rm ROT}(h(\bx_j;\bW),\hat{\by}_j) \\
  =  \minop_{\gamma \in \Pi(\mu_1, \mu_2)} f(\gamma) +  \lambda_{\gamma} \langle \gamma, {\rm ln}(\gamma) \rangle \\
   =  \maxop_{
   \nu_1, \nu_2 \in \mathbb{R}^L,
   \Delta \in \mathbb{R}^{L \times L} \geq 0
   }  \ \minop_{\gamma \in  \mathbb{R}^{L \times L}}  f(\gamma) \\ + \lambda_{\gamma} \langle \gamma, {\rm ln}(\gamma) \rangle 
   \\ - \langle \nu_1, \gamma \bone - \mu_1\rangle - \langle \nu_2, \gamma^{\top} \bone - \mu_2\rangle - \langle \Delta, \gamma \rangle,
  \end{array}
\end{equation}
where $\bone$ is the column vector of ones of size $L$, $\langle \cdot, \cdot \rangle$ is the standard inner product between matrices, $\nu_1$, $\nu_2$, and $\Delta$ are the dual variables, and $f$ is the non-linear convex function obtained as $f(\gamma) = \max_{\bM \in \mathcal{M}} \langle \bV_{\gamma}, \bM \rangle$. The last equality in (\ref{supp:eqn:rotdual}) comes from strong duality.

Our interest is to compute the gradient $\nabla_{\mu_1} {\rm W}_{\rm ROT}(\mu_1,\mu_2)$ of ${\rm W}_{\rm ROT}(\mu_1,\mu_2)$ with respect to $\mu_1$. Given the optimal solution $(\nu_1^*, \nu_2^*, \Delta^*, \gamma^*, \bM^*(\gamma^*))$, the gradient has the expression
\begin{equation}\label{supp:eqn:dual_gradient}
\nabla_{\mu_1} {\rm W}_{\rm ROT}(\mu_1,\mu_2) = \nu_1^* - \frac{\langle \nu_1^*, \bone \rangle}{L}\bone,
\end{equation}
where $\nu_1^*$ is the partial derivative of (\ref{supp:eqn:rotdual}) with respect to $\mu_1$ and the second term $\frac{\langle \nu_1^*, \bone \rangle}{L}\bone$ is the normal component of $\nu_1$ to the simplex set $\bone ^\top \mu_1  = 1$. Overall, $\nabla_{\mu_1} {\rm W}_{\rm ROT}(\mu_1,\mu_2)$ is {\it tangential} to the simplex $\bone ^\top \mu_1  = 1$ at $\mu_1$.

To compute the expression for the right hand side of ({\ref{supp:eqn:dual_gradient}}), we look at the optimality conditions of (\ref{supp:eqn:rotdual}), i.e.,
\begin{equation}\label{supp:eqn:kkt}
\begin{array}{lll}
\nabla_{\gamma} f (\gamma^*) + \lambda_{\gamma}({\rm ln}(\gamma^*) + \bone \bone^\top) \\
\qquad \qquad \qquad - \nu_1^* \bone ^\top - \bone {\nu_2^*} ^\top - \Delta^* = \bzero.
\end{array}
\end{equation}
Here, $\Delta ^* = \bzero$ as $\gamma^* > 0$ (entropy regularization and complementary slackness). Consequently, (\ref{supp:eqn:kkt}) boils down to
\begin{equation}\label{supp:eqn:kkt2}
\begin{array}{lll}
\Rightarrow  \nu_1^* \bone ^\top + \bone {\nu_2^*} ^\top = \bA  ,
\end{array}
\end{equation}
where $\bA = \nabla_{\gamma} f (\gamma^*) + \lambda_{\gamma}({\rm ln}(\gamma^*) + \bone \bone^\top) = \bC^* + \lambda_{\gamma}({\rm ln}(\gamma^*) + \bone \bone^\top)$. Here, $\bC^*(\bl_p,\bl_q) = (\bl_p - \bl_q)^\top \bM^*(\gamma^*) (\bl_p - \bl_q)$ and $\bl_p$ is the ground embedding for $p$-th label. From (\ref{supp:eqn:kkt2}), $\nu_1^*$ and $\nu_2^*$ are translation invariant, i.e., $\nu_1^* + \alpha \bone$ and $\nu_2^* - \alpha \bone$ are solutions for all $\alpha \in \mathbb{R}$. However, we are interested not in $\nu_1^*$, but in $\nu_1^* - \frac{\langle \nu_1^*, \bone \rangle}{L}\bone$, which is unique (as its mean is $\bzero$). 

The term $\nu_1^* - \frac{\langle \nu_1^*, \bone \rangle}{L}\bone$ can be computed directly by eliminating $\nu_2^*$ in (\ref{supp:eqn:kkt2}) using basic operations (pre- and post-multiplication with $\bone$) to obtain
\begin{equation*}
    \nu_1^* - \frac{\langle \nu_1^*, \bone \rangle}{L}\bone = \frac{1}{L}\bA \bone - \frac{\bone ^\top \bA \bone}{L^2} \bone.
\end{equation*}
This completes the proof of the lemma.

\begin{table}[ht]
  \caption{Average performance on five random groupings of features}\label{appendix:table:robust_r_results}
  \begin{center}
    \begin{tabular}{lccccc}
      \toprule
      \multirow{2}{*}{Loss} & \multirow{2}{*}{$r$} & Animals & MNIST \\ 
      & & (AUC) & (AUC) \\
      \midrule
      \multirow{3}{*}{${\rm W_{P}}$, $k=1$  (Eq.~\ref{eqn:proposed-p-norm-min})} & $5$ & $0.805 \pm 0.003$ & $\mathbf{0.957} \pm 0.001$\\
      & $10$ & $0.909 \pm 0.002$ & $0.939 \pm 0.001$\\
      & $20$ & $\mathbf{0.911} \pm 0.002$ & $0.920 \pm 0.002$\\
      \midrule
      \multirow{3}{*}{${\rm W_{P}}$, $k=2$ (Eq.~\ref{eqn:proposed-p-norm-min})} & $5$ & $0.892 \pm 0.001$ & $0.787 \pm 0.010$ \\
      & $10$ & $0.880 \pm 0.001$ & $0.700 \pm 0.002$ \\
      & $20$ & $0.875 \pm 0.002$ & $0.601 \pm 0.001$ \\
      \midrule
      \multirow{3}{*}{${\rm W_{KL}}$ (Eq.~\ref{eqn:proposed-min-kl-divergence})} & $5$ & $0.799 \pm 0.003$ & $0.806 \pm 0.034$ \\
      & $10$ & $0.799 \pm 0.003$ & $0.801 \pm 0.039$ \\
      & $20$ & $0.799 \pm 0.003$ & $0.781 \pm 0.001$ \\
      \midrule
      \multirow{3}{*}{${\rm W_{DS}}$ (Eq.~\ref{eqn:proposed-doubly-stochastic})} & $5$ & $\mathbf{0.911} \pm 0.002$ & $0.928 \pm 0.002$ \\
      & $10$ & $0.884 \pm 0.003$ & $0.868 \pm 0.010$ \\
      & $20$ & $0.855 \pm 0.006$ & $0.817 \pm 0.013$ \\
      \bottomrule
    \end{tabular}
  \end{center}
\end{table}





\subsection{Additional details on experiments and results}\label{supp:sec:experiments}
All the multi-class/label learning experiments are performed in a standard setting, where the fastText embeddings are unit normalized (via $2$-norm), the Sinkhorn algorithm is run for $10$ iterations, the FW algorithm is run for $1$ iteration (our initial experiments showed that a single FW iteration resulted in a good quality convergence), and $\lambda_{\gamma}$ is $0.02$ in (\ref{eqn:rotminmax}), and $\lambda_{\beta}$ (LMO step in Algorithm~\ref{alg:FW}) is $0.2$. 
Following \cite{frogner15a}, we regularize the softmax model parameters by $0.0005 \|{\rm \bW} \|_2 ^2$ in Problem (\ref{eqn:rotminmax}).

Table~\ref{appendix:table:robust_r_results} shows additional results with randomized feature groups.

\subsection{Movies dataset}\label{supp:movies_dataset}

We also comparatively study the subspace the robust Wasserstein (SRW) distance \cite{paty19a} and the ${\rm W_P}$ distance (with $r=10,k=1$) between the scripts of seven movies. We follow the experimental protocol described in \cite{paty19a}. The marginals are the histograms computed from the word frequencies in the movie scripts and each word is represented as a $300$-dimensional fastText embedding \cite{bojanowski16a}. 

It should be noted that the range/spread of SRW (\ref{eqn:SRW1}) and ${\rm W_P}$ (\ref{eqn:proposed-p-norm-min}) distances are different for the same movie. Hence, 
Table \ref{appendix:table:movie_results} reports the normalized SRW and the normalized ${\rm W_P}$ distances for all pairs of movies. The normalization is done column-wise as follows: we divide all the SRW distances in a column by the maximum SRW distance in that column (and similarly normalize the ${\rm W_P}$ distances as well). This normalization ensures that for a given movie (representing the column), the minimum relative distance is $0$ (with itself) while the maximum relative distance is $1$ for both SRW and ${\rm W_P}$. 

We observe that both SRW and ${\rm W_P}$ are usually consistent in selecting the closest movie (i.e., the row corresponding with minimum non-zero distance in a column). However, ${\rm W_P}$ tends to have a wider spread of distances, i.e., the difference in the distances corresponding to the closest and the furthest movies. As an example, SRW computes similar distances for the pairs (Kill Bill Vol.1, Kill Bill Vol.2) and (Inception, The Martian) while ${\rm W_P}$ gives the (Kill Bill Vol.1, Kill Bill Vol.2) pair a much lower relative distance (which seems more reasonable as they are sequels).


\begin{table*}[ht]
  \caption{Normalized distances between movie scripts. Each cell indicates the ${\rm SRW}_2^2$/${\rm W_P}$ distances respectively. Here, D = Dunkirk, G = Gravity, I = Interstellar, KB1 = Kill Bill Vol.1, KB2 = Kill Bill Vol.2, TM = The Martian, and T = Titanic.}\label{appendix:table:movie_results}
  \begin{center}
\scriptsize
    \begin{tabular}{|c|c|c|c|c|c|c|c|}
      \hline
 &
  D &
  G &
  I &
  KB1 &
  KB2 &
  TM &
  T \\
  \hline
D &
  0.000/0.000 &
  0.906/0.943 &
  0.911/0.951 &
  0.995/1.000 &
  0.995/0.998 &
  0.964/1.000 &
  $\mathbf{0.924}$/0.931 \\
  \hline
G &
  0.911/0.931 &
  0.000/0.000 &
  0.847/0.880 &
  1.000/1.000 &
  1.000/1.000 &
  0.907/0.858 &
  1.000/0.978 \\
  \hline
I &
  0.916/0.901 &
  $\mathbf{0.847}$/$\mathbf{0.846}$ &
  0.000/0.000 &
  0.995/0.953 &
  1.000/0.961 &
  $\mathbf{0.876}$/$\mathbf{0.814}$ &
  0.978/$\mathbf{0.898}$ \\
  \hline
KB1 &
  0.965/0.972 &
  0.966/0.985 &
  0.961/0.978 &
  0.000/0.000 &
  $\mathbf{0.808}$/$\mathbf{0.673}$ &
  0.984/0.964 &
  0.973/0.945 \\
  \hline
KB2 &
  1.000/0.984 &
  1.000/1.000 &
  1.000/1.000 &
  $\mathbf{0.837}$/$\mathbf{0.683}$ &
  0.000/0.000 &
  1.000/0.960 &
  0.978/0.948 \\
  \hline
TM &
  0.921/1.000 &
  0.862/0.870 &
  $\mathbf{0.833}$/0.859 &
  0.969/0.992 &
  0.951/0.973 &
  0.000/0.000 &
  0.989/1.000 \\
  \hline
T &
  $\mathbf{0.842}$/$\mathbf{0.814}$ &
  0.906/0.867 &
  0.887/$\mathbf{0.828}$ &
  0.913/0.849 &
  0.887/0.840 &
  0.943/0.874 &
  0.000/0.000 \\
  \hline

    \end{tabular}
  \end{center}
\end{table*}

\bibliographystyle{unsrt}
\bibliography{minimaxOT}

\begin{thebibliography}{10}

\bibitem{peyre19a}
G.~Peyr{\'e} and M.~Cuturi.
\newblock Computational optimal transport.
\newblock {\em Foundations and Trends in Machine Learning}, 11(5-6):355--607,
  2019.

\bibitem{courty17a}
N.~Courty, R.~Flamary, A.~Habrard, and A.~Rakotomamonjy.
\newblock Joint distribution optimal transportation for domain adaptation.
\newblock In {\em NeurIPS}, 2017.

\bibitem{jawanpuria20a}
P.~Jawanpuria, N.~T.~V. Satya~Dev, and B.~Mishra.
\newblock Efficient robust optimal transport with application to multi-label
  classification.
\newblock Technical report, arXiv preprint arXiv:2010.11852, 2020.

\bibitem{janati19a}
H.~Janati, M.~Cuturi, and A.~Gramfort.
\newblock Wasserstein regularization for sparse multi-task regression.
\newblock In {\em AISTATS}, 2017.

\bibitem{melis18a}
D.~Alvarez-Melis and T.~Jaakkola.
\newblock {G}romov-{W}asserstein alignment of word embedding spaces.
\newblock In {\em EMNLP}, 2018.

\bibitem{mjaw20b}
P.~Jawanpuria, M.~Meghwanshi, and B.~Mishra.
\newblock Geometry-aware domain adaptation for unsupervised alignment of word
  embeddings.
\newblock In {\em ACL}, 2020.

\bibitem{gurumoorthy21a}
K.~Gurumoorthy, P.~Jawanpuria, and B.~Mishra.
\newblock {SPOT}: A framework for selection of prototypes using optimal
  transport.
\newblock In {\em ECML}, 2021.

\bibitem{rubner00}
Y.~Rubner, C.~Tomasi, and L.~J. Guibas.
\newblock The earth mover's distance as a metric for image retrieval.
\newblock {\em IJCV}, 40(2):99--121, 2000.

\bibitem{villani09a}
C.~Villani.
\newblock {\em Optimal Transport: Old and New}, volume 338.
\newblock Springer Verlag, 2009.

\bibitem{frogner15a}
C.~Frogner, C.~Zhang, H.~Mobahi, M.~Araya-Polo, and T.~Poggio.
\newblock Learning with a {W}asserstein loss.
\newblock In {\em NeurIPS}, 2015.

\bibitem{genevay18a}
A.~Genevay, G.~Peyr{\'e}, and M.~Cuturi.
\newblock Learning generative models with {S}inkhorn divergences.
\newblock In {\em AISTATS}, 2018.

\bibitem{arjovsky17a}
M.~Arjovsky, S.~Chintala, and L.~Bottou.
\newblock {W}asserstein generative adversarial networks.
\newblock In {\em ICML}, 2017.

\bibitem{paty19a}
F.-P. Paty and M.~Cuturi.
\newblock Subspace robust {W}asserstein distances.
\newblock In {\em ICML}, 2019.

\bibitem{paty20a}
F.-P. Paty and M.~Cuturi.
\newblock Regularized optimal transport is ground cost adversarial.
\newblock In {\em ICML}, 2020.

\bibitem{dhouib20a}
S.~Dhouib, I.~Redko, T.~Kerdoncuff, R.~Emonet, and M.~Sebban.
\newblock A {S}wiss army knife for minimax optimal transport.
\newblock In {\em ICML}, 2020.

\bibitem{bental09a}
A.~Ben-Tal, L.~El~Ghaoui, and A.~Nemirovski.
\newblock {\em Robust optimization}.
\newblock Princeton University Press, 2009.

\bibitem{bertsimas11a}
D.~Bertsimas, D.~B. Brown, and C.~Caramanis.
\newblock Theory and applications of robust optimization.
\newblock {\em SIAM review}, 53(3):464--501, 2011.

\bibitem{kantorovich42a}
L.~Kantorovich.
\newblock On the translocation of masses.
\newblock {\em Doklady of the Academy of Sciences of the USSR}, 37:199--201,
  1942.

\bibitem{kolouri19a}
S.~Kolouri, K.~Nadjahi, U.~\c{S}im\c{s}ekli, R.~Badeau, and G.~K. Rohde.
\newblock Generalized sliced {W}asserstein distances.
\newblock In {\em NeurIPS}, 2019.

\bibitem{deshpande19a}
I.~Deshpande, Y.-T. Hu, R.~Sun, A.~Pyrros, N.~Siddiqui, S.~Koyejo, Z.~Zhao,
  D.~Forsyth, and A.~Schwing.
\newblock Max-sliced {W}asserstein distance and its use for gans.
\newblock In {\em CVPR}, 2019.

\bibitem{sion58}
M.~Sion.
\newblock On general minimax theorems.
\newblock {\em Pacific J. Math.}, 8(1):171--176, 1958.

\bibitem{rosales06a}
R.~Rosales and G.~Fung.
\newblock Learning sparse metrics via linear programming.
\newblock In {\em KDD}, 2006.

\bibitem{qi09a}
G.-J. Qi, J.~Tang, Z.-J. Zha, T.-S. Chua, and H.-J. Zhang.
\newblock An efficient sparse metric learning in high-dimensional space via
  l1-penalized log-determinant regularization.
\newblock In {\em ICML}, 2009.

\bibitem{petrovich20a}
Mathis Petrovich, Chao Liang, Ryoma Sato, Yanbin Liu, Yao-Hung~Hubert Tsai,
  Linchao Zhu, Yi~Yang, Ruslan Salakhutdinov, and Makoto Yamada.
\newblock Feature robust optimal transport for high-dimensional data.
\newblock Technical report, arXiv preprint arXiv:2005.12123, 2020.

\bibitem{zass06a}
R.~Zass and A.~Shashua.
\newblock Doubly stochastic normalization for spectral clustering.
\newblock In {\em NeurIPS}, 2006.

\bibitem{arora11a}
R.~Arora, M.~Gupta, A.~Kapila, and M.~Fazel.
\newblock Clustering by left-stochastic matrix factorization.
\newblock In {\em ICML}, 2011.

\bibitem{wang16a}
X.~Wang, F.~Nie, and H.~Huang.
\newblock Structured doubly stochastic matrix for graph based clustering.
\newblock In {\em SIGKDD}, 2016.

\bibitem{douik18a}
A.~Douik and B.~Hassibi.
\newblock Low-rank {R}iemannian optimization on positive semidefinite
  stochastic matrices with applications to graph clustering.
\newblock In {\em ICML}, 2018.

\bibitem{douik2019manifold}
A.~Douik and B.~Hassibi.
\newblock Manifold optimization over the set of doubly stochastic matrices: A
  second-order geometry.
\newblock {\em IEEE Transactions on Signal Processing}, 67(22):5761--5774,
  2019.

\bibitem{cuturi13a}
M.~Cuturi.
\newblock Sinkhorn distances: Lightspeed computation of optimal transport.
\newblock In {\em NeurIPS}, 2013.

\bibitem{jaggi2013revisiting}
M.~Jaggi.
\newblock {Revisiting Frank-Wolfe: Projection-free sparse convex optimization}.
\newblock In {\em ICML}, 2013.

\bibitem{bertsekas1995nonlinear}
D.~P. Bertsekas.
\newblock {\em Nonlinear Programming}.
\newblock Athena Scientific, 1995.

\bibitem{mikolov13a}
T.~Mikolov, I.~Sutskever, K.~Chen, G.~Corrado, and J.~Dean.
\newblock Distributed representations of words and phrases and their
  compositionality.
\newblock In {\em NeurIPS}, 2013.

\bibitem{bojanowski16a}
P.~Bojanowski, E.~Grave, A.~Joulin, and T.~Mikolov.
\newblock Enriching word vectors with subword information.
\newblock {\em TACL}, 5:135--146, 2017.

\bibitem{lampert13a}
C.~Lampert, H.~Nickisch, and S.~Harmeling.
\newblock Learning to detect unseen object classes by between-class attribute
  transfer.
\newblock In {\em CVPR}, 2009.

\bibitem{thomee16a}
B.~Thomee, D.~A. Shamma, G.~Friedland, B.~Elizalde, K.~Ni, D.~Poland, D.~Borth,
  and L.-J. Li.
\newblock Yfcc100m: The new data in multimedia research.
\newblock {\em Communications of ACM}, 59(2):64--73, 2016.

\bibitem{vedaldi15a}
A.~Vedaldi and K.~Lenc.
\newblock Matconvnet: Convolutional neural networks for matlab.
\newblock In {\em ACM International Conference on Multimedia}, page 689–692,
  2015.

\bibitem{mjaw15a}
P.~Jawanpuria, M.~Lapin, M.~Hein, and B.~Schiele.
\newblock Efficient output kernel learning for multiple tasks.
\newblock In {\em NeurIPS}, 2015.

\bibitem{bental01a}
A.~Ben-Tal and A.~Nemirovskiaei.
\newblock {\em Lectures on Modern Convex Optimization: Analysis, Algorithms,
  and Engineering Applications}.
\newblock Society for Industrial and Applied Mathematics, 2001.

\bibitem{knight08a}
P.~A. Knight.
\newblock {The Sinkhorn-Knopp Algorithm: Convergence and Applications}.
\newblock {\em SIAM J. Matrix Anal. Appl.}, 30(1):261--275, 2008.

\end{thebibliography}

\end{document}